\theoremstyle{plain}
\newtheorem{theorem}{Theorem}[section]
\newtheorem{lemma}[theorem]{Lemma}
\theoremstyle{definition}
\newtheorem{definition}[theorem]{Definition}
\theoremstyle{remark}
\newtheorem{remark}[theorem]{Remark}
\DeclareRobustCommand\onedot{\futurelet\@let@token\bmv@onedotaux}
\def\bmv@onedotaux{\ifx\@let@token.\else.\null\fi\xspace}
\def\eg{\emph{e.g}\onedot} 
\def\ie{\emph{i.e}\onedot} 
\def\etc{\emph{etc}\onedot} \def\vs{\emph{vs}\onedot}
\def\wrt{w.r.t\onedot}
\def\etal{\emph{et al}\onedot}
\def\ps@myheadings{%
    \let\@oddfoot\@empty\let\@evenfoot\@empty
    \def\@evenhead{\thepage\hfil\slshape\leftmark}%
    \def\@oddhead{{\slshape\rightmark}\hfil\thepage}%
    \let\@mkboth\@gobbletwo
    \let\sectionmark\@gobble
    \let\subsectionmark\@gobble
    }
  \renewcommand\maketitle{\begin{titlepage}%
  \let\footnotesize\small
  \let\footnoterule\relax
  \let \footnote \thanks
  \null\vfil
  \vskip 60\p@
  \begin{center}%
    {\LARGE \@title \par}%
    \vskip 3em%
    {\large
     \lineskip .75em%
      \begin{tabular}[t]{c}%
        \@author
      \end{tabular}\par}%
      \vskip 1.5em%
    {\large \@date \par}%       % Set date in \large size.
  \end{center}\par
  \@thanks\@notice
  \vfil\null
  \end{titlepage}%
  \setcounter{footnote}{0}%
}
\renewcommand\maketitle{\par
  \begingroup
    \renewcommand\thefootnote{\@fnsymbol\c@footnote}%
    \def\@makefnmark{\rlap{\@textsuperscript{\normalfont\color{black}\@thefnmark}}}%
    \long\def\@makefntext##1{\parindent 1em\noindent
            \hb@xt@1.8em{%
                \hss\@textsuperscript{\normalfont\@thefnmark}}##1}%
    \if@twocolumn
      \ifnum \col@number=\@ne
        \@maketitle
      \else
        \twocolumn[\@maketitle]%
      \fi
    \else
      \newpage
      \global\@topnum\z@   % Prevents figures from going at top of page.
      \@maketitle
    \fi
    \thispagestyle{plain}\@thanks\@notice
  \endgroup
  \setcounter{footnote}{0}%
}
\newcites{latex}{References}
\title{COSTA: Covariance-Preserving Feature Augmentation for Graph Contrastive Learning}
\author{%
  Yifei Zhang$^1$, Hao Zhu$^{2,3}$, Zixing Song$^1$, Piotr Koniusz$^{3,2}$, Irwin King$^1$\\
  The Chinese University of Hong Kong$^1$, Hong Kong SAR, China\\
Australian National University$^2$ and Data61/CSIRO$^3$, Canberra, Australia\\
 \texttt{\{yfzhang,zxsong,king\}@cse.cuhk.edu.hk}\\ \texttt{allenhaozhu@gmail.com; piotr.koniusz@data61.csiro.au}\\
  % examples of more authors
  % \And
  % Coauthor \\
  % Affiliation \\
  % Address \\
  % \texttt{email} \\
  % \AND
  % Coauthor \\
  % Affiliation \\
  % Address \\
  % \texttt{email} \\
  % \And
  % Coauthor \\
  % Affiliation \\
  % Address \\
  % \texttt{email} \\
  % \And
  % Coauthor \\
  % Affiliation \\
  % Address \\
  % \texttt{email} \\
}
\begin{document}

\maketitle

\begin{abstract}
Graph contrastive learning (GCL) improves graph representation learning, leading to SOTA on various downstream tasks. The graph augmentation step is a vital but scarcely studied step of GCL. In this paper, we show that the node embedding obtained via the graph augmentations is highly biased, somewhat limiting contrastive models from learning discriminative features for downstream tasks.Thus, instead of investigating graph augmentation in the input space, we alternatively propose to perform augmentations on the hidden features (feature augmentation). Inspired by  so-called matrix sketching, we propose \textbf{COSTA}, a novel \textbf{CO}variance-pre\textbf{S}erving fea\textbf{T}ure space \textbf{A}ugmentation framework for GCL, which generates augmented features by maintaining a ``good sketch'' of original features. To highlight the superiority of feature augmentation with {COSTA}, we investigate a single-view setting (in addition to multi-view one) which conserves memory and computations. We show that the feature augmentation with COSTA achieves comparable/better results than graph augmentation based models.
%
%Graph contrastive learning (GCL) improves graph representation learning, leading to SOTA on various downstream tasks. The graph augmentation step is a vital but scarcely studied step of GCL. In this paper, we show that the node embedding obtained via the graph augmentations is highly biased, somewhat limiting contrastive models from learning discriminative features for downstream tasks.Thus, instead of investigating graph augmentation in the input space, we alternatively propose to perform augmentations on the hidden features (feature augmentation). Inspired by  so-called matrix sketching, we propose COSTA, a novel COvariance-preServing feaTure space Augmentation framework for GCL, which generates augmented features by maintaining a ``good sketch'' of original features. To highlight the superiority of feature augmentation with COSTA, we investigate a single-view setting (in addition to multi-view one) which conserves memory and computations. We show that the feature augmentation with COSTA achieves comparable/better results than graph augmentation based models.
\end{abstract}

\section{Introduction}
\label{sec:intro}
Many Graph Neural Networks (GNNs)~\cite{kipf2016semi,deeper_look2,DBLP:conf/www/ZhangZMKK22,DBLP:conf/cikm/SongMZK21,9737635,yang2020featurenorm} focus on (semi-)supervised learning, which requires access to abundant labels. Recent trends in Self-Supervised Learning (SSL) have resulted in several methods that do not require labels~\cite{kipf2016variational,hamilton2017inductive}. Among SSL methods, Contrastive Learning (CL) already achieved comparable performance with its supervised counterparts on many tasks~\cite{chen2020simple,gao2021simcse}. Recently, CL has been applied to the graph domain. A typical Graph Contrastive Learning (GCL) method constructs multiple graph views by stochastic augmentation of the input to learn representations by contrasting positive samples with negative samples~\cite{zhu2020deep,peng2020graph,zhu2021graph}. However, the irregular structure of graphs complicates the adaptation of augmentation techniques used on images and prevents  extending of theoretical analysis for vision-based contrastive learning to graphs. Thus, many works focus on the empirical design of hand-crafted graph augmentations (GA) for graph contrastive learning (\ie, random edge/node/attribute dropping)~\cite{you2020graph,zhu2020deep,zhu2021graph}. Notably, some latest works point out that random data augmentations are problematic as their noise may not be relevant to downstream tasks~\cite{DBLP:journals/corr/AdversaGA,DBLP:conf/nips/whatGoodVeiw}. In certain scenarios (\ie, recommendation systems\cite{yang2022hrcf,chen2021attentive,chen2021modeling}), GCL  achieves the desired performance gain under extremely sparse GAs (with an edge dropout rate 0.9)~\cite{wu2021self} but   method~\cite{yu2022graph} achieves similar results without GAs. Such observations naturally raise the question: are there better augmentation strategies for GCL other than GA?

\begin{figure}[t]
    \centering
     \begin{subfigure}[t]{0.4\textwidth}
         \centering
         \includegraphics[width=\textwidth]{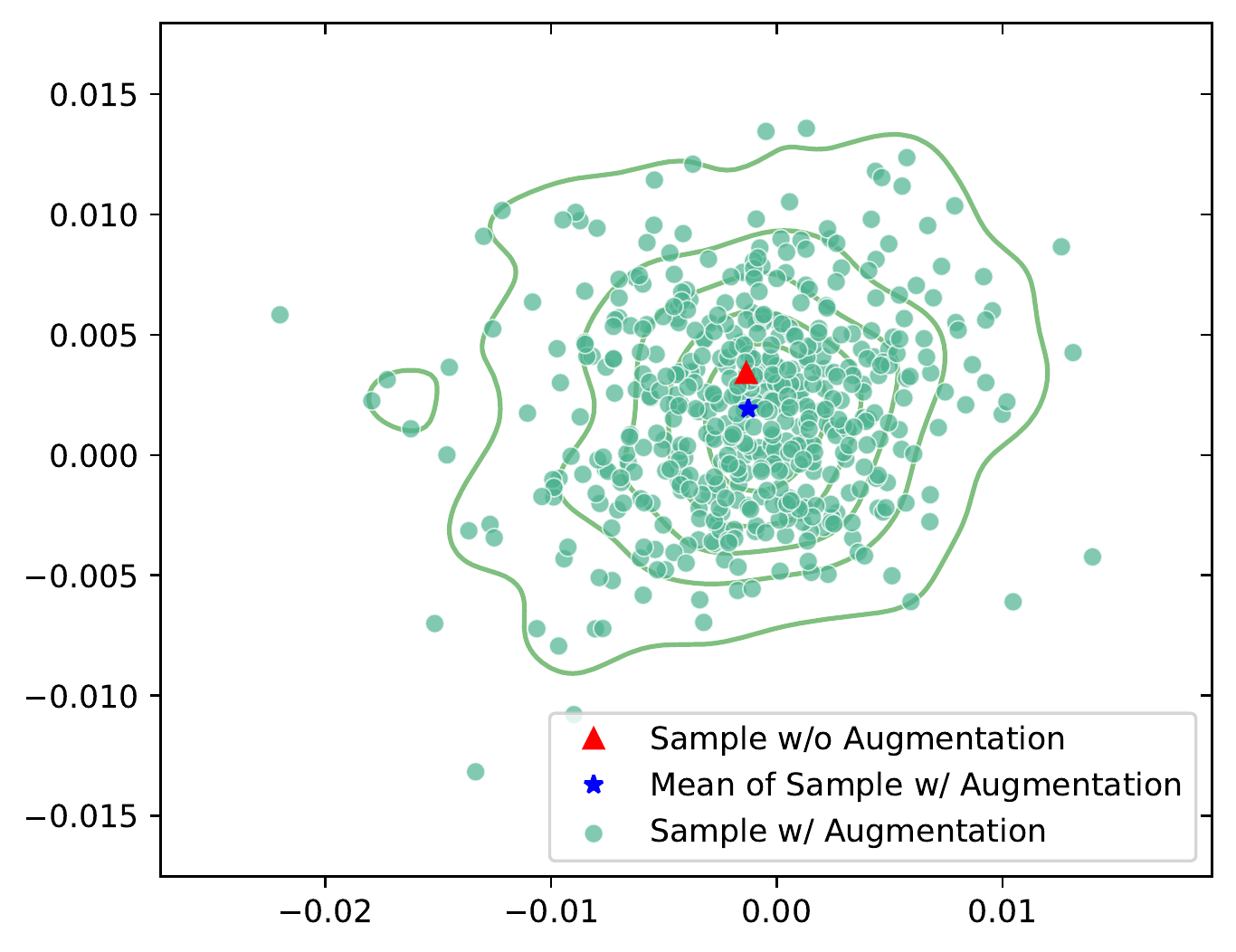}
         \caption{FA is unbiased.}
         \label{fig:issuseb}
     \end{subfigure}
    %  \hfill
     \begin{subfigure}[t]{0.4\textwidth}
         \centering
         \includegraphics[width=\textwidth]{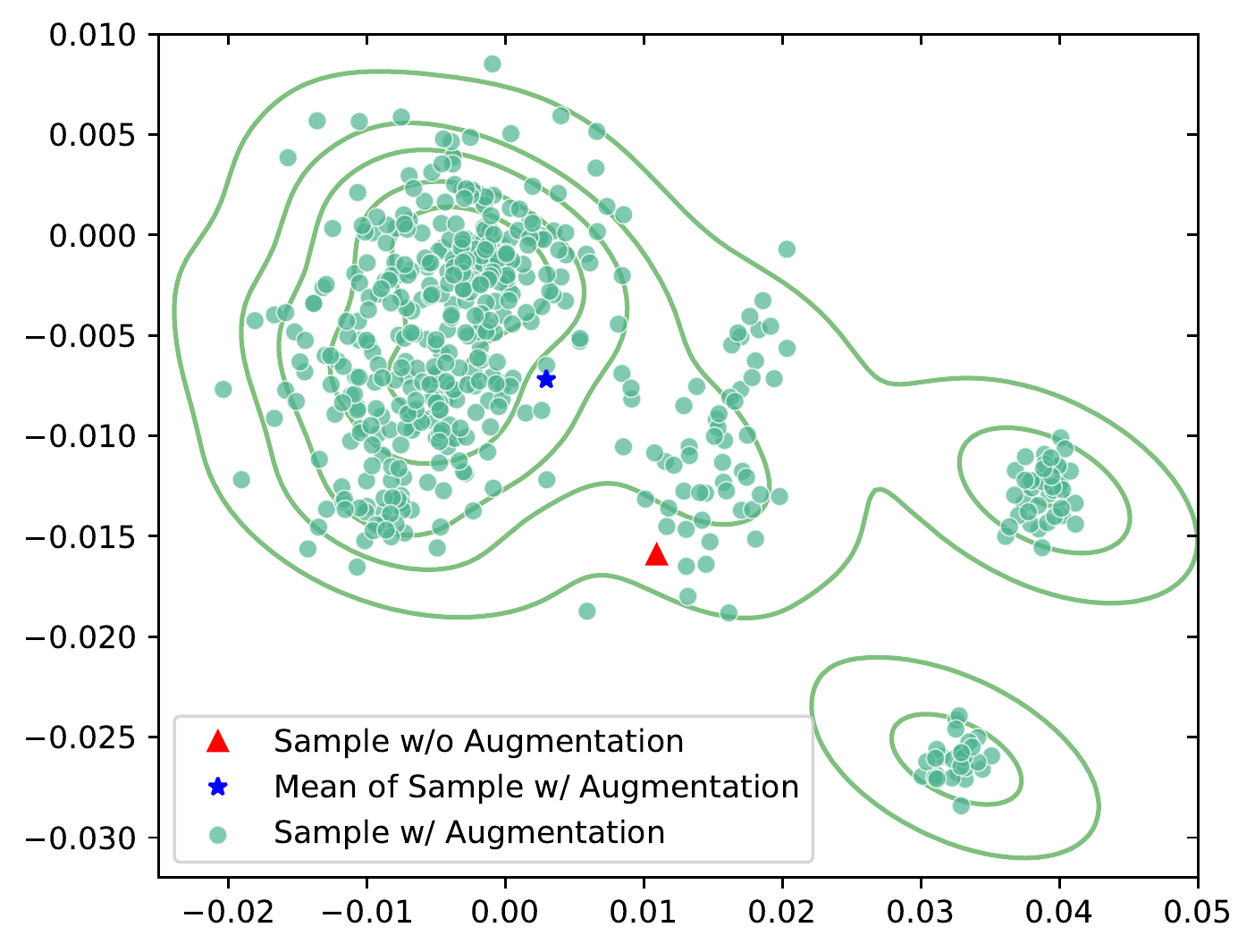}
         \caption{GA is biased.}
         \label{fig:issusea}
     \end{subfigure}
     \caption{The distribution of node embeddings on Cora is generated by 500$\times$ graph augmentations.~\ref{fig:issuseb} corresponds to the feature augmentation (Gaussian noise injection).~\ref{fig:issusea} corresponds to the graph augmentation (edge permutation \& attribute masking). We use 2D embeddings for visualization.
     }
     \label{fig:issuse}
    %  \vspace{-0.5cm}
\end{figure}

To this end, we show that the embeddings obtained with GA are highly biased compared to the embeddings obtained with feature augmentation (FA), that is, the embeddings obtained with FA (\eg, an injection of random noise into the embedding) exhibit the so-called weak law of large numbers (WLLN). Specifically, for any error $\varepsilon \geq 0$, $\lim_{k \rightarrow \infty} \mathcal{P}\left(\|\mathbb{E}(\tilde{\bm{x}}^{(k)}) - \bm{x}\|_1 > \varepsilon\right)=0$, where $\tilde{\bm{x}}^{(k)}$ denotes the embedding obtained by augmenting $\bm{x}$, and $\bm{x}$ is the original embedding without augmentation. For the i.i.d.  %of independent and identically distributed 
random variables, as the sample size $k$ increases, the expectation of embedding after augmentation, $\mathbb{E}(\tilde{\bm{x}}^{(k)})$ , tends toward the real mean $\bm{x}$ (embedding without augmentation). 
In contrast to FA, GA violates the weak law of large numbers. 
%
%
% We find that, unlike other vanilla data augmentations, graph augmentation tend to produce biased augmentations of which expectation doesn't converge to the original sample.
%
%
As shown in Figure~\ref{fig:issuseb}, the population mean for the embedding obtained by the FA is in the densest region and in the proximity of the embedding of the original sample (without augmentation). In contrast, we cannot see such a trend in the case of the GA in Figure~\ref{fig:issusea}. In other words, GA introduces some bias, whereas FA produces unbiased embeddings. 

We assert that a successful contrastive objective should promote similarity/dissimilarity between features of encoded attributes by implicitly grouping/separating related/unrelated nodes according to their attribute space, respectively. However, as the GA strategy results in the bias (Figure~\ref{fig:issusea}), attraction/separation of embeddings in the feature space does not necessarily result in an optimal attraction/separation of desired nodes in the attribute space, which may result in suboptimal pre-training for downstream tasks. Figure~\ref{fig:conterexample} and Section~\ref{sec:motivation} further illustrate and motivate the above two scenarios.
Furthermore, the adoption of GA in GCL often increases the complexity as 
GCL compares the node features obtained from multiple views (\eg, multiple network streams) to obtain correlated views of the same graph. However, this strategy is prohibitive on large graphs as, in the worst-case scenario, multi-view GCL requires a time and space complexity quadratic \wrt the number of views and nodes. Thus, apart from the multi-view setting, we also investigate a single-view GCL setting.

\vspace{0.1cm}
\noindent\textbf{Our Contributions.} Instead of the GA, we propose to perform augmentation on the hidden feature vectors (feature augmentation). Inspired by matrix sketching, we propose \textbf{COSTA}, a novel \textbf{CO}variance-pre\textbf{S}erving fea\textbf{T}ure space \textbf{A}ugmentation framework for GCL, which produces augmented features by generating a ``good sketch'' of original features. To highlight the superiority of feature augmentation, apart from the multi-view setting, we show many results in the single-view setting, which conserves the memory usage and computations. We empirically show that COSTA (even the single-view variant, \ie, COSTA$_{SV}$) achieves comparable or better results than other GA strategies. 

\vspace{0.1cm}
Our contributions are threefold:
\renewcommand{\labelenumi}{\roman{enumi}.}
\hspace{-1.0cm}
\begin{enumerate}[leftmargin=0.6cm]
\item We point out the issue of bias introduced by the topology graph augmentation in the GCL framework, and we advocate feature augmentation strategies to prevent the aforementioned bias.

\item Inspired by matrix sketching, we propose COSTA, a simple and effective covariance-preserving feature augmentation framework for GCL, which generates augmented features by generating a ``good sketch'' (variance is bounded) of original features.

\item As an alternative to the multi-view GCL setting, we propose the single-view GCL setting, which produces equivalent or better results than the multi-view GCL while requiring less memory and incurring shorter computations.
%\vspace{-0.2cm}
\end{enumerate}

%\vspace{0.1cm}
To our best knowledge, this is the first work which considers feature augmentation (in the single-view setting) in GCL with the matrix sketching step performing feature augmentation.

\begin{figure}[tp]
    \centering
    \begin{subfigure}[c]{0.6\textwidth}
         \centering
         \includegraphics[width=\textwidth]{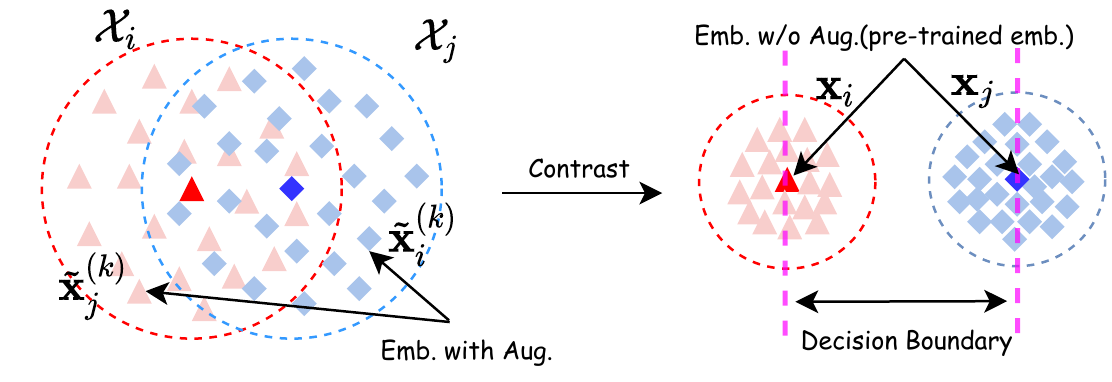}
         \caption{CL under unbiased augmentation.}
         \label{fig:auga}
     \end{subfigure}
    %  \hfill
     \begin{subfigure}[c]{0.6\textwidth}
         \centering
         \includegraphics[width=\textwidth]{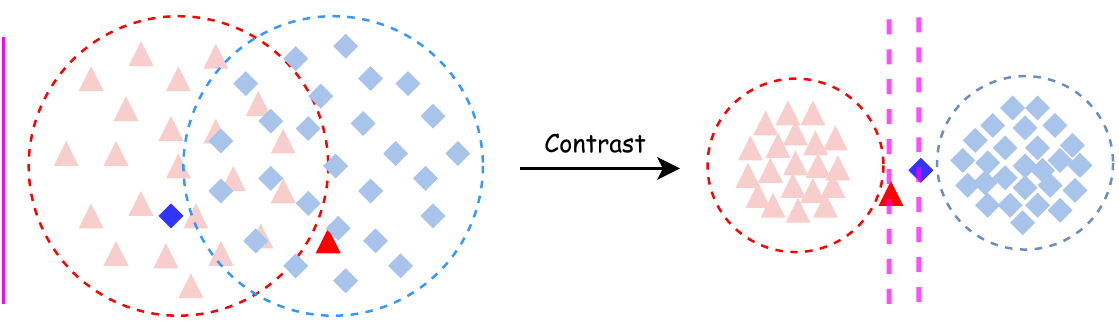}
         \caption{CL under biased augmentation.}
         \label{fig:augb}
     \end{subfigure}
    \caption{({Fig.~\ref{fig:auga}}) Our strategy results in an unbiased augmentation strategy. (Fig.~\ref{fig:augb}) A counter-example illustrates the problem of using a biased augmentation strategy in CL.}
    \label{fig:conterexample}
    % \vspace{-0.5cm}
\end{figure}
\section{Related Works}
\label{sec:relatedwork}
\subsection{Data Augmentation}
\vspace{0.1cm} 
\noindent\textbf{Input Space Augmentation},
%Input space data augmentation, 
 studied across many domains, usually refers to the augmentation performed in the input space. In computer vision, image transformations such as rotation, flipping, color jitters, translation, and noise injection~\cite{shorten2019survey} as well as more recent cut-off and random erasure~\cite{devries2017dataset} are very popular. In neural language processing, input space augmentations include token-level random augmentations such as synonym replacement, word swapping, word insertion, and deletion~\cite{wei2019eda}. In the graph domain, input space augmentation is referred to as graph augmentation. Attribute masking, edge permutation, and node dropout are common graph augmentation strategies~\cite{you2020graph}. Adaptive graph augmentations based on %heuristics such as 
 node centrality and  PageRank centrality were studies by Zhu \etal~\cite{zhu2021graph} and Page \etal~\cite{page1999pagerank} respectively with the goal of masking different edges with varying probability. We discuss the negative effect of graph augmentation (\eg, edge and node removal) later in the text. %in the following pages
% It should be emphasized that, unlike other input space Date Augmentation, any topology-level graph augmentation will introduce bias into the hidden feature space.

\vspace{0.1cm} \noindent\textbf{Feature Augmentation }
%Feature space augmentation 
strategies generate augmented samples in the feature space instead of the input space~\cite{feng2021survey}. Wang \etal~\cite{wang2019implicit} augment features in the hidden feature space, resulting in a feature representation that corresponds to another sample with the same class label but different semantics. So-called instance augmentations add perturbations to original instances~\cite{wang2019implicit}. Many few-shot learning approaches \cite{hariharan2017low} estimate the ``analogy transformations'' between examples of known classes to apply them to examples of novel classes.  
Finally, feature augmentations are popular in many research domains, \ie, semi-supervised learning, one-shot learning, and few-shot learning. However, no prior work has combined FA with contrastive learning in the graph domain the way COSTA performs FA.

\subsection{Graph Contrastive  Learning}
Inspired by contrastive methods in vision and NLP~\cite{he2020momentum, chen2020simple, gao2021simcse}, CL has also been adapted to the graph domain. By adapting DeepInfoMax~\cite{bachman2019learning} to graph representation learning, DGI~\cite{velickovic2019deep} learns  embedding %of nodes and graphs 
by maximizing the mutual information to discriminate between nodes of original and corrupted graphs. REFINE \cite{zhu2021refine} uses a simple negative sampling term inspired by skip-gram models. Fisher-Bures Adversarial GCN \cite{uai_ke} uses adversarial perturbations of graph Laplacian. Inspired by SimCLR~\cite{chen2020simple}, GRACE~\cite{zhu2020deep}  correlates graph views by pushing closer representations of the same node in different views and pushing apart representations of different nodes. Another example of a SimCLR strategy is the recent GraphCL method~\cite{hafidi2020graphcl}. In contrast to GRACE, which learns node embedding, GraphCL learns embeddings for graph-level tasks. The above multi-view  methods suffer from the large memory and computational footprint, respectively. Although COLES~\cite{zhu2021contrastive} proposes a robust single-view GCL approach, it works the best with linear GNNs such as S\textsuperscript{2}GC~\cite{zhu2021simple}.
Thus, apart from multi-view COSTA, we also study the single-view GCL setting with FA. 

\section{Preliminaries}
\label{sec:SVGCL}
\subsection{Notations}
In this paper, a graph with node features is denoted as $G=(\mathcal{V}, \mathcal{E}, \bm{X})$, where $\mathcal{V}$ is the vertex set, $\mathcal{E}$ is the edge set, and $\bm{X} \in \mathbb{R}^{n \times d}$ is the feature matrix (\ie, the $i$-th row of $\bm{X}$ is the feature vector $\bm{x}_i$ of node $v_{i}$). 
Let $n=|\mathcal{V}|$ and $m=|\mathcal{E}|$ be the numbers of vertices and edges respectively. We use $\bm{A} \in\{0,1\}^{n \times n}$ to denote the adjacency matrix of $G$, \ie, the $(i, j)$-th entry in $\bm{A}$ is 1 if and only if there is an edge between $v_{i}$ and $v_{j}$.
The degree of a node $v_{i}$, denoted as $d_{i}$, is the number of edges incident with $v_{i}$.
The degree matrix $\bm{D}$ is a diagonal matrix, and its $i$-th diagonal entry is $d_{i}$. 
For a $d$-dimensional vector, $\bm{x} \in \mathbb{R}^d, \|\bm{x}\|_{2}$ is the Euclidean norm of $\bm{x}$.
We use ${x}_{i}$ to denote the $i$-th entry of $\bm{x}$, and $\operatorname{diag}(\bm{x}) \in \mathbb{R}^{d \times d}$ is a diagonal matrix such that the $i$-th diagonal entry is $x_{i}$. 
We use $\bm{A}_{i:}$ and $\bm{A}_{:i}$ to denote the $i$-th row and column of $\bm{A}$ respectively, and ${A}_{ij}$ for the $(i, j)$-th entry of $\bm{A}$. 
The trace of a square matrix $\bm{A}$ is denoted by $\operatorname{Tr}(\bm{A})$, which is the sum along the diagonal of $\bm{A}$. 
The singular value decomposition of $\bm{A}$ is denoted as $\bm{A} = \bm{U}\bm{\Sigma}\bm{V}^\top$ where $\bm{U}=\left[\bm{u}_{1}, \ldots, \bm{u}_{n}\right], \bm{\Sigma}=\operatorname{diag}\left (\sigma_{1}, \ldots, \sigma_{d}\right)$, and $\bm{V}=\left[\bm{v}_{1}, \ldots, \bm{v}_{d}\right]$. 
We use $\| \bm{A} \|_2$ to denote the spectral norm of $\bm{A}$, which is the largest singular value $\sigma_{\max}$. 
We use $\|\bm{A}\|_F$ for the Frobenius norm, which is $\|\bm{A}\|_{\mathrm{F}}=\sqrt{\sum_{i,j}\left|a_{i j}\right|^{2}}=\sqrt{\operatorname{
Tr}\left(\bm{A^\top A}\right)}=\sqrt{\sum_{i=1}^{d} \sigma_{i}^{2}(\bm{A})}$.

\subsection{Multi-view Graph Contrastive Learning (MV-GCL)}
Following the conventions presented in \cite{zhu2020deep, zhu2021graph},
MV-GCL learns node representations by maximizing the mutual information (MI) between views of the same graph.
Below, we introduce components of MV-GCL: (i) graph augmentation, (ii) GNN-based encoders, (iii) projection head, and (iv) a contrastive loss. 

\vspace{0.1cm}
\noindent\textbf{Graph Augmentation.} %Graph Augmentation 
$\mathcal{T}_{GA}$ generates augmented  $(\tilde{\bm{A}}, \tilde{\bm{X}})$ by directly adding random perturbations to the original graph $({\bm{A}}, {\bm{X}})$. 
Different augmented graphs are constructed given one input $(\bm {A}, \bm {X})$, yielding correlated views $(\tilde{\bm{A}}_i, \tilde{\bm{X}}_i)$ that represent the augmented adjacent matrix and node features in the $i$-th view.
In the common GCL setting~\cite{zhu2020deep, zhu2021graph}, the graph structure is augmented via edge permutation. Node features are augmented via attribute masking.
% \item

\vspace{0.1cm}
\noindent\textbf{Graph Neural Network Encoders.} The GNN encoder $f_i:
% \mathbb{R}^{n\timesn} \times \mathbb{R}^{n\times|d|} \rightarrow \mathbb{R}^{n\times|q|}$ 
\mathbb{R}^{n\times n} \times \mathbb{R}^{n \times d} \rightarrow \mathbb{R}^{n\times d}$ 
extracts hidden node features $\bm{H}_i\in \mathbb{R}^{n\times d}$ from the $i$-th augmented graph $(\tilde{\bm{A}}_i, \tilde{\bm{X}}_i)$. 
Usually, multiple encoders are applied to obtain the hidden node features $\bm{H}_i$ of different views as:
\begin{equation}
    \bm{H}_1 = f_1 (\tilde{\bm{A}}_1, \tilde{\bm{X}}_1), \cdots, \bm{H}_k = f_k (\tilde{\bm{A}}_k, \tilde{\bm{X}}_k).
\end{equation}
The GNN encoders are implemented as a two-layer Graph Convolution Network (GCN):
%\vspace{-0.3cm}
    \begin{equation}
    \begin{aligned}
    \mathrm{GCN}_{l} (\bm{X}, \bm{A}) &=\sigma\left (\hat{\bm{D}}^{-\frac{1}{2}} \hat{\bm{A}} \hat{\bm{D}}^{-\frac{1}{2}} \bm{X} \bm{W}_l\right), \\
    f (\bm{X}, \bm{A}) &=\mathrm{GCN}_{2}\left (\mathrm{GCN}_{1} (\bm{X}, \bm{A}), \bm{A}\right),
    \end{aligned}
    \end{equation}
where $\hat{\bm{A}}=\bm{A}+\bm{I}$ is the adjacency matrix with self-loops, $\bm{D}$ is the degree matrix, $\sigma (\cdot)$ is an activation function, \eg, $\operatorname{ReLU} (\cdot)=\max (0, \cdot)$, and $\bm{W}_l$ is a trainable weight matrix for the $l$-th layer.

\vspace{0.1cm}
\noindent\textbf{Feature Augmentation $\mathcal{T}_{FA}$.}
We apply feature augmentation on $\bm{H}$. We elaborate on the proposed FA and detail its properties in Section~\ref{sec:cpfa}. FA results in the augmented feature maps  fed into the projection head describe below.  

\vspace{0.1cm}
\noindent\textbf{Projection Head.} The projection head $\theta(\cdot)$ is a small network that maps representations to the space where contrastive loss is applied. It is implemented as a multi-layer perceptron (MLP) with one hidden layer to obtain $\bm{Z}_{i}=\theta\left (\bm{H}_{i}\right)=\bm{W}^{ (2)} \sigma\left (\bm{W}^{ (1)} \bm{H}_{i}\right)$, where $\sigma$ is the ReLU non-linearity. As described in~\cite{chen2020simple}, it is beneficial to define the contrastive loss on $\bm{Z}_{i}$ rather than $\bm{H}_{i}$.

\vspace{0.1cm}
\noindent\textbf{Contrastive Loss}. Let two feature matrices $\bm{U} \in \mathbb{R}^{n \times d}$ and $\bm{V} \in \mathbb{R}^{n\times d}$, where $\bm{U}=\bm{Z}_1$ and $\bm{V} = \bm{Z}_2$ are node features obtained from two different views. Then for any node $i$, its embedding generated in one view, $\bm{U}_{i:}$, is treated as the anchor, its embedding generated in another view, $\bm{V}_{i:}$, forms the positive sample. Remaining node embeddings $\bm{U}_{j:}$ and $\bm{V}_{j:}$ such that $j\neq i$ (from two views) are naturally regarded as negative samples. The contrastive loss function $\mathcal{L}$ for all positive pairs is defined as:
\begin{equation}
\label{eq:mvcontrastiveloss}
\begin{aligned}
&\mathcal{L} = \frac{1}{n}\sum_{i=1}^{n} \Big[{(\bm{U}^\top_{i:}, \bm{V}_{i:})}/\tau -\log \Big(
\sum_{j=1}^{n} e^{\left (\bm{U}_{i:}^\top, \bm{V}_{j:}\right) / \tau}+\sum_{j=1}^{n} e^{\left (\bm{U}^\top_{i:}, \bm{U}_{j:}\right) / \tau}\Big) \Big].
\end{aligned}
\end{equation}

Note that computing Eq.~(\ref{eq:mvcontrastiveloss}) is both memory costly and time consuming as it requires the computation of three large similarity matrices, $\bm{UV}^\top$, $\bm{UU}^\top$, $\bm{VU}^\top \in \mathbb{R}^{n\times n}$ for two views. Therefore, the memory consumption and runtime  depend on the number of views multiplied by the number of nodes, making such a multi-view setting challenging to run on large-scale graphs.

\vspace{0.1cm}
\noindent\textbf{Single-view Graph Contrastive Learning (SV-GCL).} To validate the effectiveness of graph augmentation and feature augmentation, apart from MV-GCL, we use a special case of multi-view GCL that shares the same augmented graph for two views and is thus equivalent to single-view Graph Contrastive Learning (SV-GCL). We note that SV-GCL has a computational advantage, \ie, only the features of a single view are calculated, and distances between features within the view. Distances between views are not needed. SV-GCL also provides a fairer way to compare the effectiveness of graph augmentations and feature augmentations as otherwise the multi-view setting would be the reason for the performance gain rather than the graph augmentation strategy. %And we will show that in our experiments.
\definecolor{beaublue}{rgb}{0.88,15,1}
\definecolor{blackish}{rgb}{0.2, 0.2, 0.2}
\definecolor{Gray}{gray}{0.9}
\definecolor{LightCyan}{rgb}{0.88,1,1}
\section{Methodology}
Section~\ref{sec:motivation}  presents our motivation. 
Section \ref{sec:cpfa} presents \textbf{COSTA}, \textbf{CO}variance pre\textbf{S}erving fea\textbf{T}ure space \textbf{A}ugmentation framework. Section~\ref{sec:matrixskeching} relates COSTA to the problem of matrix sketching, which generates desired augmented samples with theoretical guarantee. 
\begin{figure}[t]
    \centering
    \begin{subfigure}[c]{0.7\textwidth}
    \includegraphics[width=\textwidth]{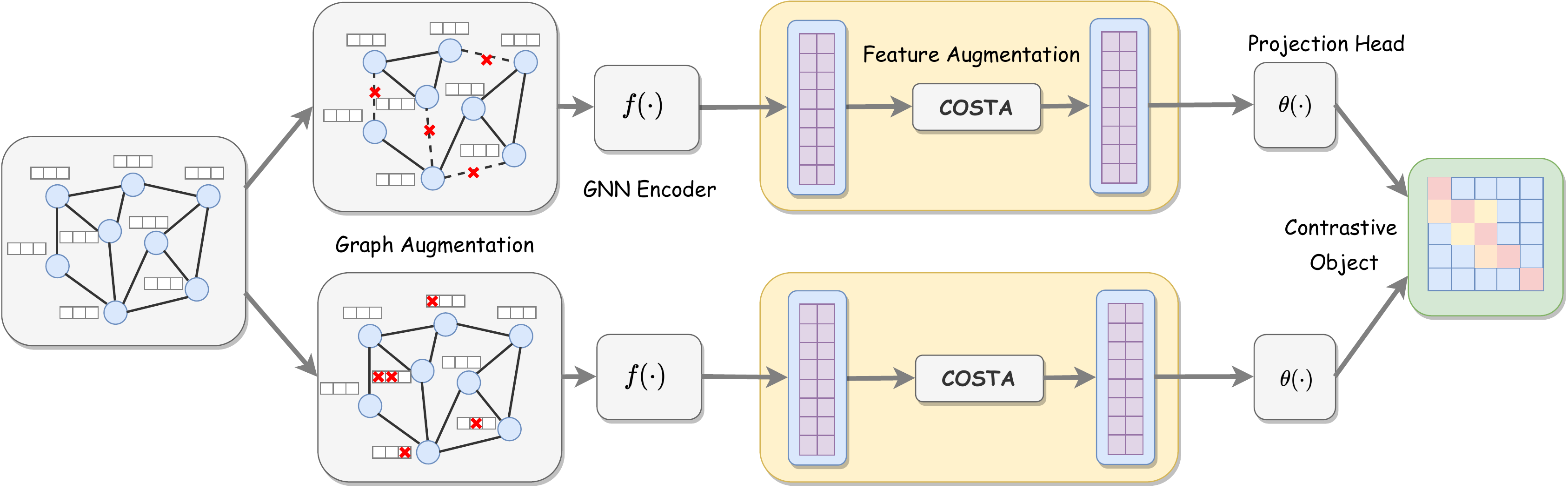}
    \caption{\label{fig:mvgcl} Multi-View GCL.}
    \end{subfigure}
    % \hspace{0.2cm}
    \begin{subfigure}[c]{0.7\textwidth}
    \includegraphics[width=\textwidth]{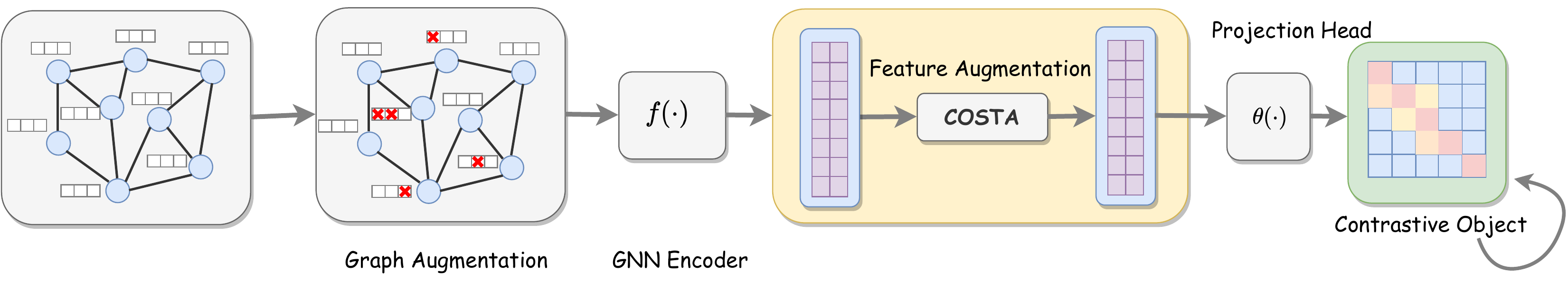}
    \caption{\label{fig:svgcl} Single-View GCL.}
    \end{subfigure}
    \label{fig:model}
    \caption{The illustrations of MV-GCL (standard) in Fig.~\ref{fig:mvgcl} and SV-GCL (simplified) in Fig~\ref{fig:svgcl}. For simplicity, the architecture of MV-GCL is shown using two views only and SV-GCL is the trivial case where two views are the same. MV-GCL contrast two views while SV-GCL perform self-contrast.}
    %\vspace{-0.2cm}
\end{figure}
\subsection{Motivation}
\label{sec:motivation} 
We motivate COSTA with a simple experimental example, which shows that the node embedding obtained under graph augmentation is highly biased compared to feature augmentation. Inspired by WLLN (the weak law of large numbers explained in the introduction), below we quantify the bias introduced by the data augmentation $\mathcal{T}(\cdot)$ as follows. Let $\bm{x}_i$ be the original embedding of the $i$-th node and  $\tilde{\mathcal{X}}_i$ be its augmentation set where each embedding $\tilde{\bm{x}}^{(k)}_i\in\tilde{\mathcal{X}}_i$ is obtained by stochastic transformation, \ie, $\mathcal{T}(f(\cdot))$ or $f(\mathcal{T}(\cdot))$. Let the transformation distribution of $\bm{x}_i$ be  $\tilde{\mathcal{T}}(\bm{x}_i)$, then: % defined as:
%\vspace{-0.2cm}
\begin{equation}
% \begin{aligned}
      \text{Bias} (\mathcal{T} (\bm{x}_i))=\Big\| \mathbb{E}_{\tilde{\bm{x}}_i \sim \tilde{\mathcal{T}}(\bm{x}_i)} (\tilde{\bm{x}}_i) - \bm{x}_i \Big\|_2  \approx \Big\|  \frac{1}{|\tilde{\mathcal{X}}_i|}\sum_{k=1}^{|\tilde{\mathcal{X}}_i|} \tilde{x}^{(k)}_i\!  - \bm{x}_i \Big\|_2.
% \end{aligned}
\label{eq:bias}
\end{equation}
% \label{sec:motivation}
Following Eq.~\eqref{eq:bias}, given a randomly chosen node (Cora dataset), we randomly generate $|\tilde{\mathcal{X}}| = 500$ augmented samples by the graph augmentation (GA) by edge permutation, and the feature augmentation (FA) by directly adding random noise to $\bm{x}$. In both cases, we use the same encoder to obtain the node embeddings. The output layer of the encoder has two dimensions to facilitate visualization. Figure~\ref{fig:issuse} depicts the distribution of node embeddings for both types of augmentations. Clearly, the expectation of node embeddings $\mathbb{E} (\mathcal{T}_{\text{FA}}({\bm{x}}))$ (the blue star in Figure~\ref{fig:issuseb}) obtained by the FA converges to its original embedding $\bm{x}$ (the red triangle in Figure~\ref{fig:issuseb}), whereas expectation of node embeddings $\mathbb{E} (\mathcal{T}_{\text{GA}}({\bm{x}}))$ obtained via the GA deviates far from $\bm{x}$, indicating the following:

\definecolor{beaublue}{rgb}{0.9, 0.95, 0.9}
\definecolor{blackish}{rgb}{0.2, 0.2, 0.2}
%\vspace{-0.2cm}
\begin{tcolorbox}[width=1.0\linewidth, colframe=blackish, colback=LightCyan, boxsep=0mm, arc=2mm, left=2mm, right=2mm, top=1mm, bottom=2mm]
%\begin{tcolorbox}[colframe=black]
%\vspace{-0.15cm}
\begin{equation}
    \operatorname{Bias}(\mathcal{T}_\text{GA}(\bm{x}))\gg\operatorname{Bias} (\mathcal{T}_\text{FA}(\bm{x})).
\end{equation}
%\vspace{-0.1cm}
\end{tcolorbox}

As shown, GA-based contrastive learning suffers from optimizing the biased setting. %augmented node embeddings. 
%As the metric learning type of loss function, 
A standard contrastive loss aims to %learn an instance discrimination features $f(\bm{x})$ for downstream tasks, by 
maximize the similarity of embeddings within the same augmentation set (\eg, $\tilde{\bm{x}}^{(1)}_i$, $\tilde{\bm{x}}^{(2)}_i \!\in \tilde{\mathcal{X}}_i$ ) and minimize the similarity of embeddings between different sets (\eg, $\tilde{\bm{x}}^{(1)}_i\!\in \tilde{\mathcal{X}}_i$ and $\tilde{\bm{x}}^{(1)}_j\!\in \tilde{\mathcal{X}}_j$). To perform well, such an objective requires the augmented embeddings to adhere to the unbiased case described above because as the bias tends to zero, the expectation of augmented embeddings converges to ${\bm{x}}_i$, \ie, $\mathbb{E}({\tilde{\bm{x}}}_i)\rightarrow{\bm{x}}_i$. Thus, pushing away $\tilde{\bm{x}}^{(k)}_i$ from $\tilde{\bm{x}}^{(k)}_j$ if $i\neq j$ separates embeddings of different instances, whereas the augmented embeddings $\tilde{\bm{x}}^{(k)}_i$  concentrate around  $\bm{x}_i$. %, augmented samples away will lead to dissimilar the learned embedding $f(\bm{x})$, 
In contrast, when the bias is large, \ie, $\|\mathbb{E}(\tilde{\bm{x}}_i) - \bm{x}_i\|_2 \gg 0 $, separating augmented embeddings of different instance (\ie, $\tilde{\bm{x}}_i^{(k)}$, $\tilde{\bm{x}}_j^{(k)}$, $i\neq j$) may not increase the discrimination of learned embeddings (\ie, $\bm{x}_i, \bm{x}_j)$ for downstream tasks. Figure~\ref{fig:conterexample} (bottom) illustrates such a case.

This motivates us to explore new ways of performing augmentations for GCL. Instead of explicitly eliminating the bias in the current GCL, we apply  feature augmentations as an alternative.

\subsection{Covariance-Preserving Feature Augmentation}
\label{sec:cpfa} 
Previous section indicates the graph augmentation produces the bias. We adopt the feature augmentation  for GCL loss because FA lets us  control the variance and reduce the bias. 
% Below we explain how to form it.
%
%
% ideal augmented samples $\tilde{x}$ of sample $\bm{x}$, and samples $\bm{x}$ themselves should all come from the same true distribution $p_{\mathcal{X}}(\bm{x})$. To that end, $p_{\mathcal{X}} (\bm{x})$ could be learned from real samples $\bm{X}$ via the variational inference~\cite{kipf2016variational} or adversarial training~\cite{pan2018adversarially}. However, these methods are inefficient and require additional networks to train. 
% %
% %
% Instead of obtaining $\bm{\tilde{x}}$ via drawing from explicitly modeled $p_{\mathcal{X}} (\bm{x})$, we alternatively achieve the same effect by making second-order statistics $\bm{\tilde{X}}^\top \bm{\tilde{X}}$ to be similar to $\bm{{X}}^\top \bm{{X}}$ so that each $\bm{\tilde{x}} \in \bm{\tilde{X}}$ can be regarded as being i.i.d sampled from a distribution $p_{\tilde{\mathcal{X}}} (\bm{x}) \sim p_{\mathcal{X}} (\bm{x})$. 

\begin{tcolorbox}[width=1.0\linewidth, colframe=blackish, colback=LightCyan, boxsep=0mm, arc=2mm, left=2mm, right=2mm, top=2mm, bottom=2mm]
%\begin{tcolorbox}[colframe=black]
Thus, we propose \textbf{the feature augmentation framework} in which the augmented feature matrix $\tilde{\bm{X}} \in \mathbb{R}^{k \times d}$, given the original feature matrix $\bm{X}\in \mathbb{R}^{n\times d}$, is obtained via:
\begin{equation}
\begin{aligned}
    \label{eq:coverr}
    &\tilde{\bm{X}} = \bm{PX} + \bm{E},\\
     \text{such that }\;&\|\bm{X}^{\top}\bm{X}-\tilde{\bm{X}}^{\top}\tilde{\bm{X}}\|_2 \leq \varepsilon \text{Tr} (\bm{X}^\top\bm{X}).
\end{aligned}
\end{equation}
\end{tcolorbox}
\noindent $\bm{P}\in \mathbb{R}^{k \times n}$ in Eq. \eqref{eq:coverr} denotes an affine transformation, $\bm{E}$ is the random noise matrix and $\varepsilon$ is the error which controls the quality of approximation  $\|\bm{X}^{\top}\bm{X}-\tilde{\bm{X}}^{\top}\tilde{\bm{X}}\|_2$. Note that the affine transformation can be either deterministic or stochastic.

\vspace{0.1cm}
\noindent\textbf{Connection to the Gaussian Noise Injection.} One special case of COSTA is the Gaussian noise injection \cite{wang2019implicit,yu2022graph} which produces the augmented feature matrix $\tilde{\bm{X}}$ by adding random noise sampled %from $\mathcal{N} (0, \varepsilon\bm{I})$ 
as $\tilde{\bm{X}}_{i:} \sim \!~\! \mathcal{N} (\bm{X}_{i:}, \varepsilon\bm{I})$, where $\varepsilon\geq 0$ controls the strength of noise.
This is equivalent to setting $\bm{P} = \bm{I}$ and $\bm{E}_{i:} \sim \mathcal{N} (0, \varepsilon\bm{I})$ in Eq.~(\ref{eq:coverr}). 
\subsection{Feature Augmentation via Matrix Sketching}
\label{sec:matrixskeching}
\begin{definition}[Matrix Sketching~\cite{liberty2013simple}]
\label{def:matrixsketch}
Let $\bm{X} \in \mathbb{R}^{n \times d}$ be the given feature matrix, $\bm{P} \in \mathbb{R}^{k \times n}$ be a sketching matrix, \eg, random projection or row selection matrix. The sketch of $\bm{X}$ is defined as $\tilde{\bm{X}}=\bm{P} \bm{X} \in \mathbb{R}^{k \times d}$. Usually, $\tilde{\bm{X}}$ contains fewer rows than $\bm{P}$, where $k\ll n$ but $\tilde{\bm{X}}$ still preserves many properties of $\bm{P}$.
\end{definition}
%By Definition~\ref{def:matrixsketch}, 
Eq.~(\ref{eq:coverr}) %can be regarded as
performs matrix sketching.
Obtaining the augmented feature matrix $\tilde{\bm{X}}$ requires  a good  sketch of $\bm{X}$ such that second-order statistics of the original and sketched matrices are similar. 
In what follows, we use SVD, random row selection, or random projection to form a sketch of $\bm{X}$. We %theoretically 
prove that $\tilde{\bm{X}}$ obtained by sketching satisfies $\| \bm{X}^\top\bm{X}-\tilde{\bm{X}}^\top\tilde{\bm{X}} \|_2 \leq \varepsilon \operatorname{Tr} (\bm{X}^\top\bm{X})$ for small $\varepsilon\geq 0$.

\vspace{0.1cm} 
\noindent\textbf{Matrix Sketching via SVD.} One solution for Eq.~(\ref{eq:coverr}) can be obtained through the singular value decomposition ($SVD$) where:
\begin{equation}
    \bm{P} = \bm{U}^\top, \bm{X} = \bm{U} \bm{\Sigma} \bm{V}^\top.
\end{equation}
\begin{lemma}
Let $\tilde{\bm{X}} = \bm{P}\bm{X}$ and  $\bm{X} = \bm{U} \bm{\Sigma} \bm{V}^\top\!$ where $\,\bm{U}=\left [\bm{u}_{1}, \ldots, \bm{u}_{n}\right]$, $\bm{\Sigma }=\operatorname{diag}\left (\sigma_{1}, \ldots, \sigma_{d}\right)$, $\bm{V}=\left[\bm{v}_{1}, \ldots, \bm{v}_{d}\right]$. Then $\|\bm{X}^\top\bm{X}-\tilde{\bm{X}}^\top\tilde{\bm{X}}\|_2$ is bounded as:
%\vspace{-0.3cm}
\begin{equation}
    \|\bm{X}^\top\bm{X}-\tilde{\bm{X}}^\top\tilde{\bm{X}}\|_2 \leq \frac{\sigma_{k+1}}{\sigma_{\max}} \operatorname{Tr} (\bm{X}^\top\bm{X}).
\end{equation}
\label{lemma:svd}
\end{lemma}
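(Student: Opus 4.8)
The plan is to compute $\tilde{\bm{X}}^\top\tilde{\bm{X}}$ explicitly from the SVD and then read off the spectral norm of the residual $\bm{X}^\top\bm{X}-\tilde{\bm{X}}^\top\tilde{\bm{X}}$ directly. First I would pin down the form of the sketching matrix: since $\bm{P}\in\mathbb{R}^{k\times n}$ with $k\ll n$, the relevant choice is $\bm{P}=\bm{U}_k^\top$, the first $k$ rows of $\bm{U}^\top$ (the top-$k$ left singular vectors). Writing $\bm{X}=\bm{U}\bm{\Sigma}\bm{V}^\top$ and using orthonormality of the columns of $\bm{U}$, one has $\bm{U}_k^\top\bm{U}=[\,\bm{I}_k\mid\bm{0}\,]$, hence $\tilde{\bm{X}}=\bm{U}_k^\top\bm{X}=\bm{\Sigma}_k\bm{V}^\top$, where $\bm{\Sigma}_k$ retains only $\sigma_1,\dots,\sigma_k$. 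Consequently $\tilde{\bm{X}}^\top\tilde{\bm{X}}=\bm{V}\,\operatorname{diag}(\sigma_1^2,\dots,\sigma_k^2,0,\dots,0)\,\bm{V}^\top$, while $\bm{X}^\top\bm{X}=\bm{V}\,\operatorname{diag}(\sigma_1^2,\dots,\sigma_d^2)\,\bm{V}^\top$. (Here $\bm{E}=\bm{0}$ in the framework of Eq.~\eqref{eq:coverr}.)

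Second, subtracting these expressions and invoking orthogonal invariance of the spectral norm, the residual equals $\bm{V}\,\operatorname{diag}(0,\dots,0,\sigma_{k+1}^2,\dots,\sigma_d^2)\,\bm{V}^\top$, whose spectral norm is its largest eigenvalue, namely $\sigma_{k+1}^2$, because the singular values are ordered decreasingly. Thus $\|\bm{X}^\top\bm{X}-\tilde{\bm{X}}^\top\tilde{\bm{X}}\|_2=\sigma_{k+1}^2$.

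Third, I would convert this equality into the stated bound. Since $\operatorname{Tr}(\bm{X}^\top\bm{X})=\sum_{i=1}^d\sigma_i^2\geq\sigma_1^2=\sigma_{\max}^2$ and $\sigma_{k+1}\leq\sigma_{\max}$, we get $\sigma_{k+1}^2\leq\sigma_{k+1}\sigma_{\max}=\frac{\sigma_{k+1}}{\sigma_{\max}}\sigma_{\max}^2\leq\frac{\sigma_{k+1}}{\sigma_{\max}}\operatorname{Tr}(\bm{X}^\top\bm{X})$, which is exactly the claim.

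There is essentially no hard step here; the only thing to watch is the dimension bookkeeping — in particular ensuring $\bm{P}$ is the truncated $\bm{U}_k^\top$ (otherwise $\tilde{\bm{X}}^\top\tilde{\bm{X}}=\bm{X}^\top\bm{X}$ and the statement is vacuous), and correctly identifying $\|\bm{V}\bm{\Lambda}\bm{V}^\top\|_2=\|\bm{\Lambda}\|_2$ for orthogonal $\bm{V}$. It is worth remarking explicitly that the first inequality is in fact an equality, so the only slack in the final bound comes from bounding $\sigma_{k+1}\sigma_{\max}$ from above by $\frac{\sigma_{k+1}}{\sigma_{\max}}\operatorname{Tr}(\bm{X}^\top\bm{X})$; in particular, choosing $k$ so that $\sigma_{k+1}$ is small makes $\tilde{\bm{X}}$ a valid sketch in the sense of Eq.~\eqref{eq:coverr}.
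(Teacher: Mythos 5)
Your proposal is correct, but it reaches the bound by a different route than the paper. The paper's proof never diagonalizes the residual: it writes $\tilde{\bm{X}}^\top\tilde{\bm{X}} = \bm{X}^\top\bm{P}^\top\bm{P}\bm{X}$, factors the difference as $\bm{X}^\top(\bm{X}-\bm{P}^\top\bm{P}\bm{X})$, invokes the Eckart--Young--Mirsky theorem to identify $\bm{P}^\top\bm{P}\bm{X}$ as the best rank-$k$ approximation $\bm{X}_k$ with $\|\bm{X}-\bm{X}_k\|_2=\sigma_{k+1}$, and then applies submultiplicativity of the spectral norm to get the intermediate bound $\|\bm{X}\|_2\,\|\bm{X}-\bm{X}_k\|_2=\sigma_1\sigma_{k+1}$ before relaxing $\sigma_1^2$ to $\|\bm{X}\|_F^2=\operatorname{Tr}(\bm{X}^\top\bm{X})$. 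You instead compute $\tilde{\bm{X}}^\top\tilde{\bm{X}}$ exactly in the right-singular basis and obtain the residual's spectral norm as an equality, $\sigma_{k+1}^2$, which is tighter than the paper's intermediate $\sigma_1\sigma_{k+1}$; both then relax to the same stated bound. Your route buys a sharper and more transparent intermediate result (and makes explicit that all the slack is in the final trace relaxation), while the paper's route generalizes more readily to sketches $\bm{P}$ that are not exactly the truncated $\bm{U}_k^\top$, since it only needs a bound on $\|\bm{X}-\bm{P}^\top\bm{P}\bm{X}\|_2$. Your observation that $\bm{P}$ must be the \emph{truncated} $\bm{U}_k^\top$ (else the statement is vacuous) is well taken --- the lemma statement writes $\bm{P}=\bm{U}^\top$ without truncation, and the paper's own proof also silently assumes the truncated form when it identifies $\bm{P}^\top\bm{P}\bm{X}$ with $\bm{X}_k$.
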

%\vspace{-0.5cm}
\begin{proof}
See Appendix~\ref{proof:svd}.
\end{proof}
%\vspace{-0.2cm}
\begin{remark}
The upper bound of  $\|\bm{X}^\top \bm{X} - \tilde{\bm{X}}^\top\tilde{\bm{X}})\|_2$ is controlled by the $(k+1)$-th largest eigenvalue $\sigma_{k+1}$. Usually, $\frac{\sigma_{k+1}}{\sigma_{\max}}$ is small as $\sigma_{k+1} \ll \sigma_{\max}$ even when $k$ is small. However, SVD is computationally intensive and unsuitable for decomposition of large feature matrices.
\end{remark}

\vspace{0.1cm} \noindent\textbf{Random Row Selection (RS).}
Randomized algorithms  trade accuracy for efficiency and strive for high accuracy and low runtime. 
A sketch of a matrix $\tilde{\bm{X}}$ can be constructed via randomly stacking the rows of the original matrix $\bm{X}$.
Random row selection employs a small subset of rows based on a pre-defined probability distribution $\mathcal{P}(i)$ to form a sketch. The random assignment matrix $\bm{P}\in \mathbb{R}^{k \times n}$  stacks one-hot vectors, \ie, $\bm{P}$ = $\{\bm{e}_{i} \in \mathbb{R}^n| \mathcal{P} (i)=\frac{\|\bm{X}_{i:}\|_2}{\|\bm{X}\|_F}\} \in \mathbb{R}^{k \times n}$, where $\bm{e}_i$ indicates that the $i$-th row is selected.
\begin{lemma}
 Let $\bm{X} \in \mathbb{R}^{n \times d}$. Let $\tilde{\bm{X}} \in \mathbb{R}^{m \times d}$ be a matrix whose rows are randomly selected from rows of  $\bm{X} \in \mathbb{R}^{n \times d}$. It holds that:
\begin{equation}
 \mathcal{P}\left(\|\bm{X}^\top\bm{X} - \tilde{\bm{X}}^\top\tilde{\bm{X}}\|_2 \leq \varepsilon \operatorname{Tr} (\bm{X}^\top\bm{X})\right) \geq 1 - e^ {\left(-\frac{(\varepsilon \sqrt{k}-1)^2}{8}\right)}.
\end{equation}
\label{lemma:randomselect}
%\vspace{-0.4cm}
\end{lemma}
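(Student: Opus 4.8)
The plan is to realize $\tilde{\bm X}^\top\tilde{\bm X}$ as an empirical, unbiased estimator of $\bm X^\top\bm X$ built from $k$ i.i.d.\ (rescaled) sampled rows, control its mean-squared error in Frobenius norm, and then promote this to a high-probability statement by a bounded-differences (McDiarmid) argument. Since $\|\bm A\|_2\le\|\bm A\|_F$ for every matrix $\bm A$, it suffices to establish the claimed tail bound for $\|\bm X^\top\bm X-\tilde{\bm X}^\top\tilde{\bm X}\|_F$; this also explains why no ambient-dimension factor appears in the final bound (so McDiarmid is preferable here to a matrix-Bernstein estimate, which would carry a spurious $d$).

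First I would fix the estimator. Writing $p_i=\|\bm X_{i:}\|_2^2/\|\bm X\|_F^2$ for the sampling distribution and letting $i_1,\dots,i_k$ be i.i.d.\ draws from it, I take the $\ell$-th row of $\tilde{\bm X}$ to be $\tfrac{1}{\sqrt{k\,p_{i_\ell}}}\bm X_{i_\ell:}$, so that $\tilde{\bm X}^\top\tilde{\bm X}=\sum_{\ell=1}^{k}\bm Y_\ell$ with $\bm Y_\ell=\tfrac{1}{k p_{i_\ell}}\bm X_{i_\ell:}^\top\bm X_{i_\ell:}$ and $\mathbb E[\bm Y_\ell]=\tfrac1k\bm X^\top\bm X$; hence $\tilde{\bm X}^\top\tilde{\bm X}$ is unbiased for $\bm X^\top\bm X$. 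The standard approximate-matrix-multiplication computation (independence of the $\bm Y_\ell$, together with $\sum_i\|\bm X_{i:}\|_2^4/p_i=\|\bm X\|_F^4$ for this choice of $p_i$) then gives $\mathbb E\big[\|\bm X^\top\bm X-\tilde{\bm X}^\top\tilde{\bm X}\|_F^2\big]\le\tfrac1k\|\bm X\|_F^4$, and by Jensen $\mathbb E\big[\|\bm X^\top\bm X-\tilde{\bm X}^\top\tilde{\bm X}\|_F\big]\le\tfrac{1}{\sqrt k}\|\bm X\|_F^2=\tfrac{1}{\sqrt k}\operatorname{Tr}(\bm X^\top\bm X)$.

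Next I would concentrate $g(i_1,\dots,i_k):=\|\bm X^\top\bm X-\sum_\ell\bm Y_\ell\|_F$ around its mean. Resampling a single index $i_\ell$ alters $\sum_\ell\bm Y_\ell$ by exactly one summand, so by the triangle inequality $g$ changes by at most $\|\bm Y_\ell^{(a)}\|_F+\|\bm Y_\ell^{(b)}\|_F$; since $\|\bm Y_\ell^{(a)}\|_F=\tfrac{1}{k p_a}\|\bm X_{a:}\|_2^2=\tfrac1k\|\bm X\|_F^2$, this is bounded by a constant $c$ of order $\|\bm X\|_F^2/k$, uniformly in $\ell$ and in $a,b$. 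McDiarmid's inequality then yields $\mathcal P\big(g\ge\mathbb E[g]+t\big)\le\exp\!\big(-2t^2/(k c^2)\big)$. Choosing $t=\varepsilon\operatorname{Tr}(\bm X^\top\bm X)-\mathbb E[g]\ge(\varepsilon-1/\sqrt k)\|\bm X\|_F^2$, substituting the bound on $\mathbb E[g]$, and using $k(\varepsilon-1/\sqrt k)^2=(\varepsilon\sqrt k-1)^2$ collapses the exponent to $-(\varepsilon\sqrt k-1)^2/8$; passing to complements gives the stated $1-e^{-(\varepsilon\sqrt k-1)^2/8}$. (When $\varepsilon\sqrt k\le1$ the right-hand side is $\le0$ and the claim is vacuous.)

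The main obstacle is simply bookkeeping the constants: the tight bounded-differences constant $c=2\|\bm X\|_F^2/k$ together with the Jensen step produces a cleaner exponent ($-(\varepsilon\sqrt k-1)^2/2$) than the claimed $1/8$, so to land exactly on $1/8$ one must either feed in a deliberately conservative $c=4\|\bm X\|_F^2/k$ (a factor $2$ in $c$ becomes a factor $4$ in $c^2$, turning $1/2$ into $1/8$), or track the exact approximation-quality constants for the particular (possibly non-optimal) rescaling implicit in the definition of $\bm P$ in the lemma. A secondary point is that the statement is phrased for the spectral norm whereas the natural estimate is Frobenius; the inequality $\|\cdot\|_2\le\|\cdot\|_F$ closes this gap at the cost of harmless looseness.
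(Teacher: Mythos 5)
Your proof is correct and, at bottom, is the same argument as the paper's --- but the paper does not carry it out itself: it simply instantiates a black-box approximate-matrix-multiplication theorem of Drineas et al.\ (with $\bm{A}=\bm{X}^\top$, $\bm{B}=\bm{X}$, $\beta=1$) and solves $\mu/\sqrt{k}=\varepsilon$ for the failure probability $\delta$, whereas you re-derive that theorem from first principles via exactly its original mechanism (unbiased importance-sampled estimator, second-moment bound plus Jensen for $\mathbb{E}[g]\le \operatorname{Tr}(\bm{X}^\top\bm{X})/\sqrt{k}$, then bounded-differences concentration). Your version is in two respects more careful than the source: you make explicit the $1/\sqrt{k\,p_{i_\ell}}$ rescaling of the selected rows and the normalized probabilities $p_i=\|\bm{X}_{i:}\|_2^2/\|\bm{X}\|_F^2$, both of which are needed for unbiasedness but are suppressed in the lemma's definition of $\bm{P}$; and your use of the sharp McDiarmid form yields the exponent $-(\varepsilon\sqrt{k}-1)^2/2$, strictly stronger than the stated $-(\varepsilon\sqrt{k}-1)^2/8$ (the $1/8$ in the cited theorem comes from the looser Azuma-type bound with increment $\Delta_\ell\le c_\ell$), so your bound implies the lemma and no constant-inflating trick is actually required. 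Two small corrections: your closing remark that the claim is ``vacuous'' for $\varepsilon\sqrt{k}\le 1$ is only true at $\varepsilon\sqrt{k}=1$; for $\varepsilon\sqrt{k}<1$ the right-hand side $1-e^{-(\varepsilon\sqrt{k}-1)^2/8}$ is strictly positive, so in that regime neither your argument nor the paper's (which forces $\varepsilon\sqrt{k}=\mu\ge 1$) establishes the stated inequality --- the lemma should be read as assuming $\varepsilon\sqrt{k}\ge 1$. Also note the lemma's $\tilde{\bm{X}}\in\mathbb{R}^{m\times d}$ versus the $k$ in the bound is a typo you have implicitly (and correctly) resolved by taking $k$ sampled rows.
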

\begin{proof}
See Appendix~\ref{proof:randomselect}.
\end{proof}
%\vspace{-0.5cm}
%\vspace{-0.6cm}
\begin{remark}
The failure probability $\delta_{RS} = e ^{(-\frac{(\varepsilon \sqrt{k}-1)^2}{8})}$ is exponentially decreasing with the error $\varepsilon$ meaning that we can bound $\|\bm{X}^\top\bm{X} - \tilde{\bm{X}}^\top\tilde{\bm{X}}\|_2$ given small $\varepsilon\geq 0$ with a high probability $1-\delta_{RS}$.
\end{remark}

\vspace{0.1cm} \noindent\textbf{Random Projection (RP).}
A sketch of matrix can be RP. The projection matrix is defined as $\bm{P} \in \mathbb{R}^{n\times k}$ whose entry $p_{ij}$ is sampled from $\mathcal{N} (0, 1)$. %
Ideally, we expect $\bm{P}$ to provide a stable sketch that approximately preserves the distance between all pairs of columns in the original matrix. As the computation of dense matrix $\bm{P}$ is time-consuming, a sparse version from Appendix~\ref{sec:vsrp}) can be used.  

\begin{lemma}
\label{lemma:rp}
 let $\tilde{\bm{X}} = \frac{1}{\sqrt{k}}\bm{P}\bm{X}$ where $p_{ij}$ %\in \bm{P}$ and
 is the $(i,j)$-th element of $\bm{P}$, and 
 $p_{ij}\sim \mathcal{N}(0,1)$. For $\varepsilon \in (0,1)$, it holds that:
\begin{equation}
\label{eq:rpbound}
 \mathcal{P}\left(\|\bm{X}^\top\bm{X} - \tilde{\bm{X}}^\top\tilde{\bm{X}}\|_2 \leq \varepsilon \operatorname{Tr} (\bm{X}^\top\bm{X})\right) \geq 1 - e^{-\frac{\varepsilon^2k}{8}}.
\end{equation}
\end{lemma}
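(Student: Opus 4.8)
\textbf{Proof proposal for Lemma~\ref{lemma:rp}.}
The plan is to reduce the spectral-norm bound to a Johnson--Lindenstrauss-type concentration statement and then invoke the classical JL tail bound. First I would observe that $\tilde{\bm{X}}^\top\tilde{\bm{X}} = \frac{1}{k}\bm{X}^\top\bm{P}^\top\bm{P}\bm{X}$, so that for any fixed unit vector $\bm{y}\in\mathbb{R}^d$ we have $\bm{y}^\top\tilde{\bm{X}}^\top\tilde{\bm{X}}\bm{y} = \frac{1}{k}\|\bm{P}\bm{X}\bm{y}\|_2^2$, while $\bm{y}^\top\bm{X}^\top\bm{X}\bm{y} = \|\bm{X}\bm{y}\|_2^2$. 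Writing $\bm{w} = \bm{X}\bm{y}$, each coordinate of $\bm{P}\bm{w}$ is $\langle \bm{p}_{i:},\bm{w}\rangle \sim \mathcal{N}(0,\|\bm{w}\|_2^2)$ and these are independent across $i$, so $\frac{1}{k}\|\bm{P}\bm{w}\|_2^2$ is $\frac{\|\bm{w}\|_2^2}{k}$ times a $\chi^2_k$ variable; its expectation is exactly $\|\bm{w}\|_2^2$, and the standard Gaussian/Laurent--Massart tail gives, for each fixed $\bm{w}$,
\begin{equation}
\mathcal{P}\!\left(\big|\tfrac{1}{k}\|\bm{P}\bm{w}\|_2^2 - \|\bm{w}\|_2^2\big| > t\,\|\bm{w}\|_2^2\right) \le 2e^{-kt^2/8}
\end{equation}
for $t\in(0,1)$. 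This is the workhorse estimate.

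Next I would upgrade this pointwise guarantee to a uniform one over the unit sphere in $\mathbb{R}^d$ via a standard $\epsilon$-net argument (take a $\tfrac14$-net $\mathcal{N}$ of $S^{d-1}$ of size at most $9^d$, union-bound the above over $\bm{y}\in\mathcal{N}$, and use the inequality $\|\bm{M}\|_2 \le 2\max_{\bm{y}\in\mathcal{N}}|\bm{y}^\top\bm{M}\bm{y}|$ for symmetric $\bm{M}$), which controls $\|\bm{X}^\top\bm{X}-\tilde{\bm{X}}^\top\tilde{\bm{X}}\|_2$ directly; the additive error then scales with $\sigma_{\max}^2(\bm{X}) = \|\bm{X}^\top\bm{X}\|_2 \le \operatorname{Tr}(\bm{X}^\top\bm{X})$, so taking $t = \varepsilon$ (and absorbing the net cardinality and the factor $2$ into the exponent, which the paper's clean bound $1 - e^{-\varepsilon^2 k/8}$ evidently does at the level of constants) yields the claim. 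An alternative route that avoids the net entirely is to apply a matrix Bernstein / matrix Chernoff inequality to the sum $\frac{1}{k}\sum_i \bm{X}^\top\bm{p}_{i:}\bm{p}_{i:}^\top\bm{X} - \bm{X}^\top\bm{X}$ of independent centered (sub-exponential) random matrices, with intrinsic dimension governed by $\operatorname{Tr}(\bm{X}^\top\bm{X})/\|\bm{X}^\top\bm{X}\|_2$; this gives the stable-rank-type dependence and matches the trace normalization on the right-hand side more transparently.

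The main obstacle I anticipate is bookkeeping the constants and the dimension dependence so that the final bound has the stated clean form $1 - e^{-\varepsilon^2 k/8}$ with no $d$ appearing: the naive net argument leaves a $9^d$ factor, so one either needs $k \gtrsim \varepsilon^{-2}d$ (which is implicit here, since $\tilde{\bm{X}}$ is meant to be a useful sketch) to swallow it, or one should run the matrix-concentration argument where the effective dimension is the stable rank $\operatorname{Tr}(\bm{X}^\top\bm{X})/\|\bm{X}^\top\bm{X}\|_2$ rather than $d$ — which is exactly why the right-hand side is phrased in terms of $\operatorname{Tr}(\bm{X}^\top\bm{X})$. I would therefore present the matrix-Chernoff version as the cleanest path, reserving the net argument as a remark, and be careful to state explicitly any mild assumption on $k$ relative to the stable rank that makes the exponent come out as written.
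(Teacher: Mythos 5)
Your workhorse estimate is exactly the one the paper uses: the scalar Johnson--Lindenstrauss/chi-squared Chernoff bound $\mathcal{P}\bigl(|\tfrac{1}{k}\|\bm{P}\bm{w}\|_2^2-\|\bm{w}\|_2^2|>\varepsilon\|\bm{w}\|_2^2\bigr)\le e^{-k\varepsilon^2/8}$ for a \emph{fixed} $\bm{w}$, followed by $\|\bm{X}\|_2^2=\sigma_{\max}^2\le\operatorname{Tr}(\bm{X}^\top\bm{X})$ to introduce the trace on the right-hand side. Where you diverge is the passage from this single-vector statement to the spectral norm of $\bm{X}^\top\bm{X}-\tilde{\bm{X}}^\top\tilde{\bm{X}}$. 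The paper takes $\bm{x}$ to be the top eigenvector of that difference matrix, writes the spectral norm as $\|\bm{X}\bm{x}\|_2^2-\|\bm{P}\bm{X}\bm{x}\|_2^2$ (up to sign), and applies the fixed-vector bound to $\bm{w}=\bm{X}\bm{x}$ --- obtaining the clean exponent $e^{-\varepsilon^2k/8}$ with no dimension dependence. You instead propose an $\epsilon$-net over $S^{d-1}$ or a matrix Chernoff argument. Your instinct here is the more rigorous one: the top eigenvector of $\bm{X}^\top\bm{X}-\tilde{\bm{X}}^\top\tilde{\bm{X}}$ is itself a function of the random matrix $\bm{P}$, so the fixed-vector concentration bound does not directly apply to it; making that step airtight requires precisely the uniform-over-the-sphere control you describe, and then the union bound over the net (or the intrinsic-dimension term in matrix Bernstein) reappears in the failure probability, forcing either $k\gtrsim\varepsilon^{-2}d$ or a stable-rank condition to recover the stated form. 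In short, your route buys correctness at the cost of a dimension-dependent factor that the lemma as stated does not display, while the paper's route buys the clean constant by quietly treating a $\bm{P}$-dependent direction as fixed. If you present the matrix-Chernoff version with an explicit mild assumption on $k$, as you suggest, you will have a proof that is strictly more defensible than the one in the appendix.
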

%\vspace{-0.5cm}
\begin{proof}
See Appendix~\ref{proof:rp}.
\end{proof}
%\vspace{-0.5cm}
\begin{remark}
Note that the failure probability of the random projection $\delta_{PR}$ is less than $\delta_{RS}$ when $k > \varepsilon^{-2}$.
\end{remark}

\section{Experiments}
Below, we  provide 
%an overview of the dataset statistics, 
details of experimental settings, and we discuss our results. We  answer the following research questions (\textbf{RQ}):
\begin{itemize}
    \item \textbf{RQ1:} What is the bias problem in graph augmentation? 
    \item \textbf{RQ2:} Does the proposed %SV-GCL method with the 
    feature augmentation work for problems of practical interest? How does its accuracy/speed compare with   MV-GCL models that adopt the graph augmentation strategy, and with other models? Does SV-GCL perform well in comparison to MV-GCL models? 
    \item \textbf{RQ3:} What is the performance of the feature augmentation given different matrix sketching schemes? What are the major factors that contribute to the success of the proposed feature augmentation method?
    \item \textbf{RQ4:} How is the effectiveness affected by the number of augmented samples?
\end{itemize}
\begin{figure}[t]
    \centering
    \begin{minipage}[t]{0.35\textwidth} 
    \includegraphics[width=\textwidth]{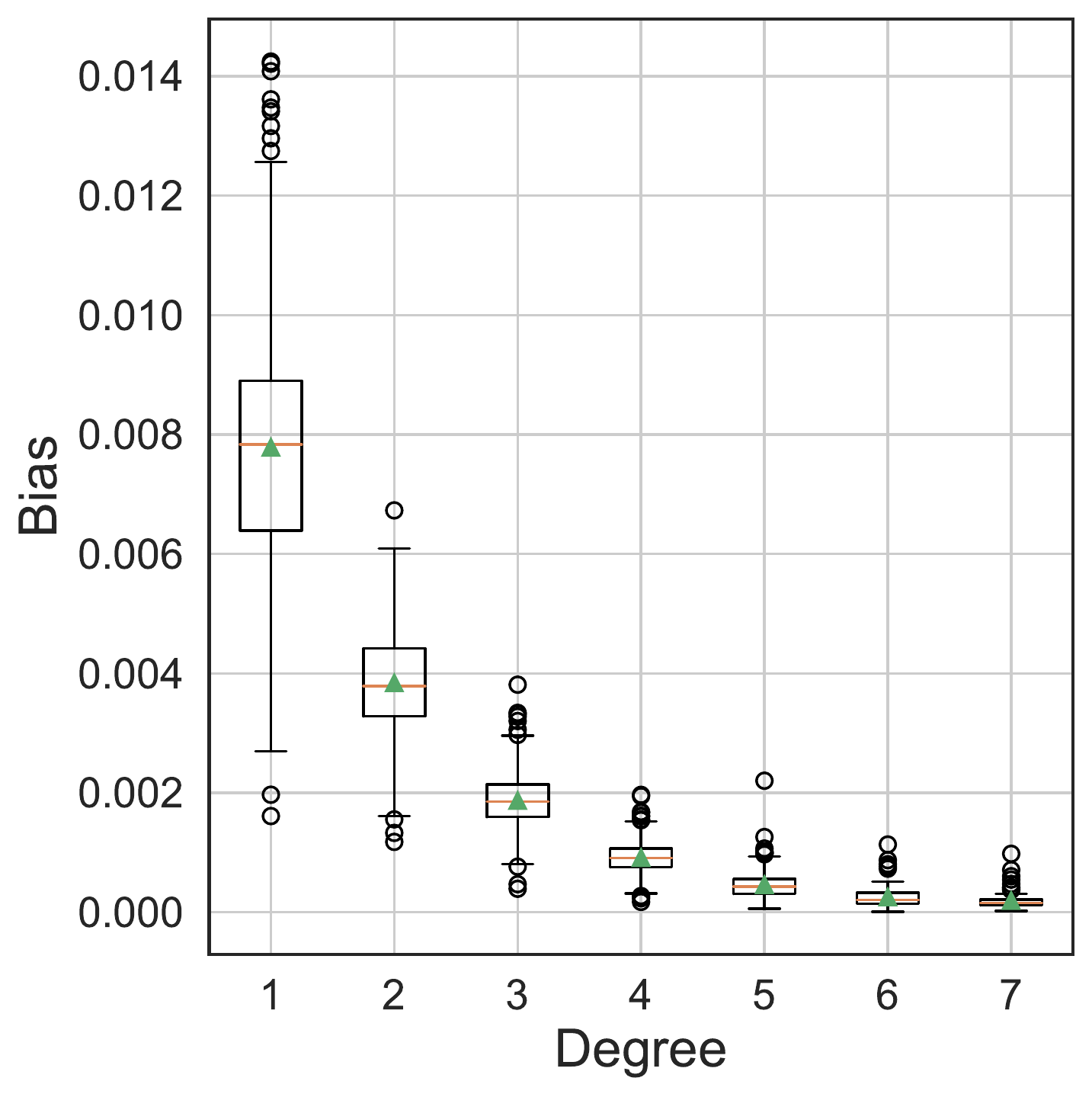}
    \caption{The bias \vs the node degree.}%Boxplot of node degree and the bias.}
    \label{fig:bpdegree}
  \end{minipage}% 
\hspace{0.1cm}
  \begin{minipage}[t]{0.325\textwidth} 
    \includegraphics[width=\textwidth]{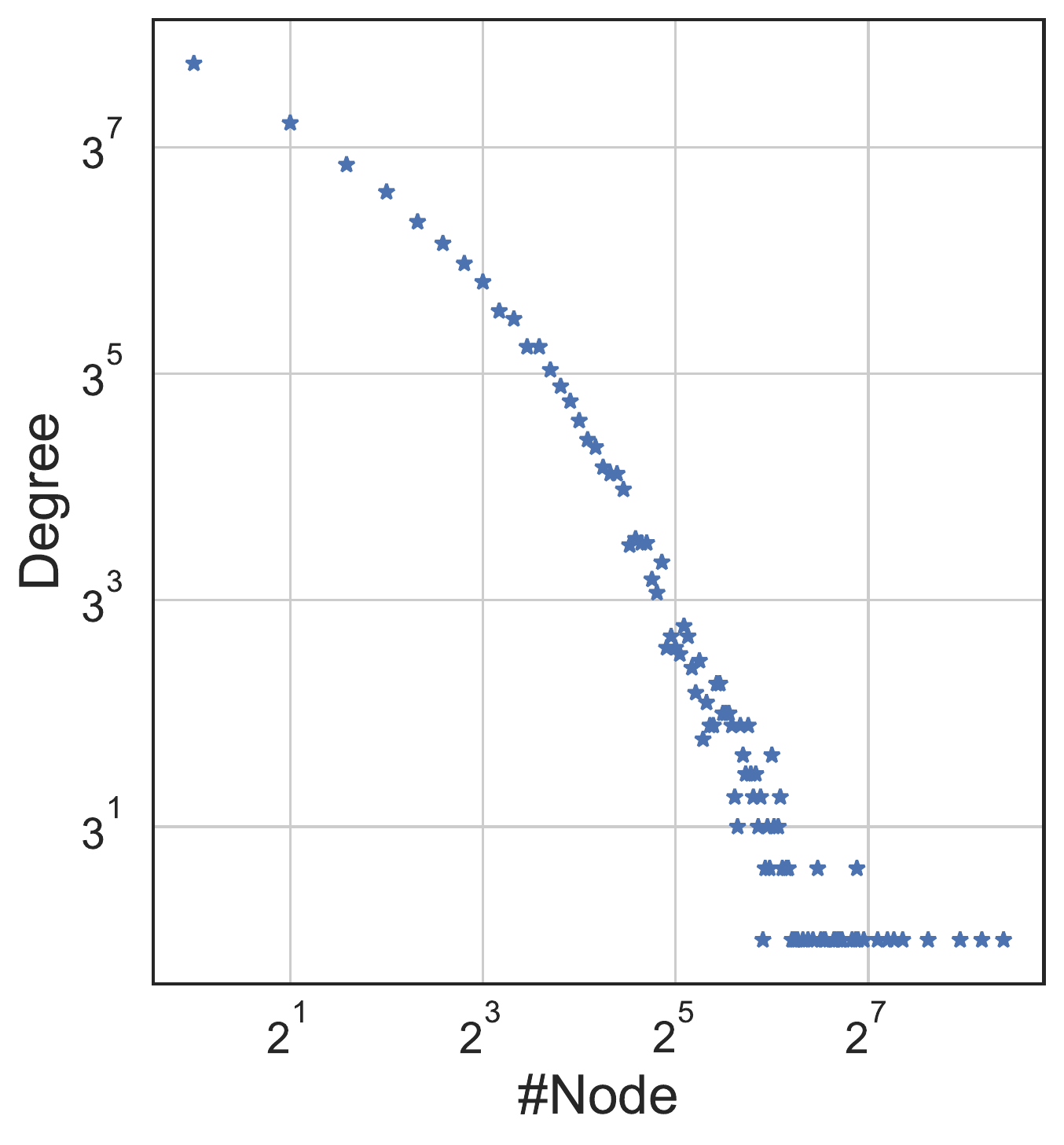}
    \caption{Node degrees obey the power law distribution.}
    \label{fig:powlaw}
  \end{minipage}% 
  %\vspace{-0.8cm}
\end{figure}

\begin{figure*}[t]
    \centering
    \begin{subfigure}[t]{0.27\textwidth}
         \centering
         \includegraphics[width=\textwidth]{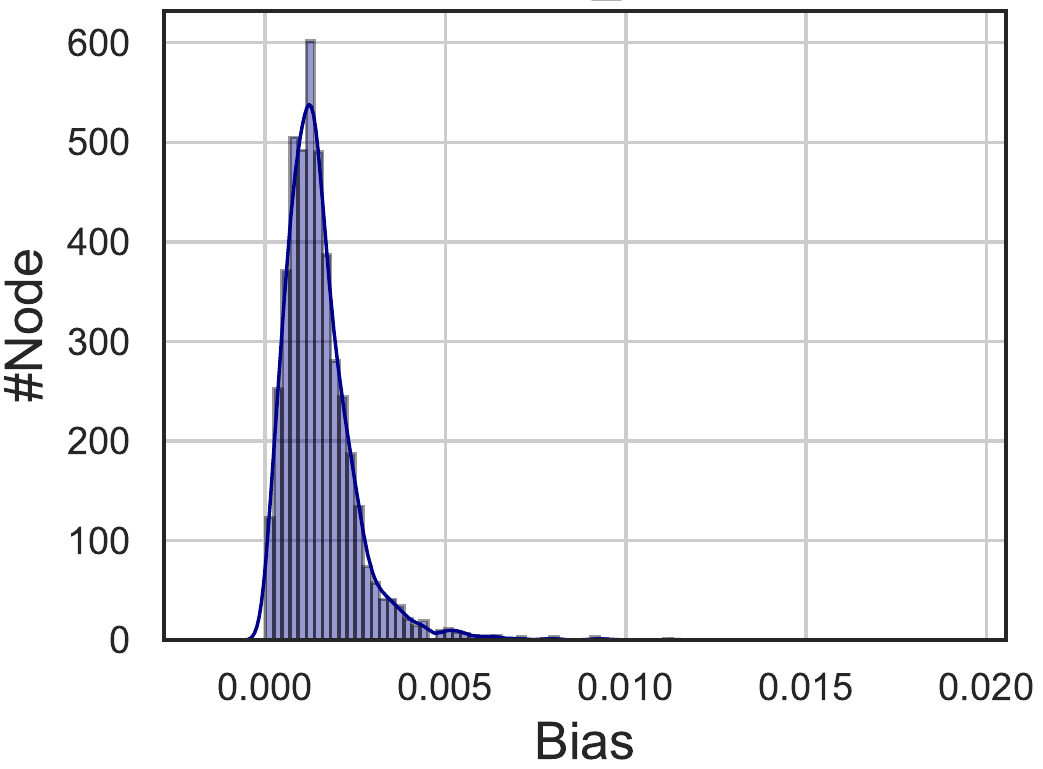}
         \caption{NN with AM ($nn\_a$)}
         \label{fig:disa}
     \end{subfigure}
     \begin{subfigure}[t]{0.23\textwidth}
         \centering
         \includegraphics[width=\textwidth]{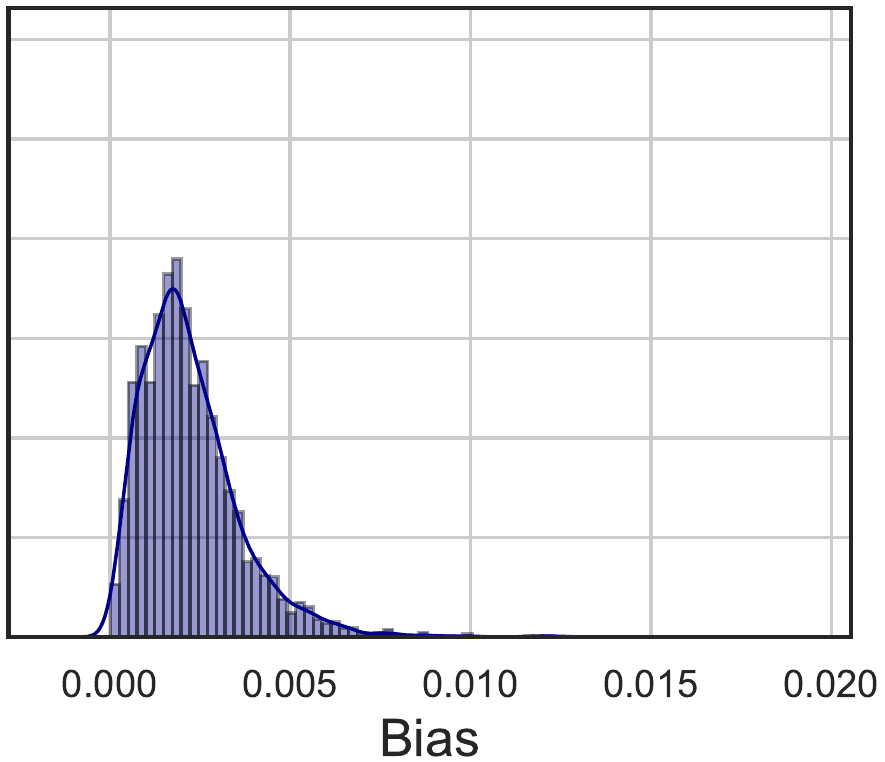}
         \caption{GNN with AM ($gnn\_a$)}
         \label{fig:disb}
     \end{subfigure}
     \begin{subfigure}[t]{0.23\textwidth}
         \centering
         \includegraphics[width=\textwidth]{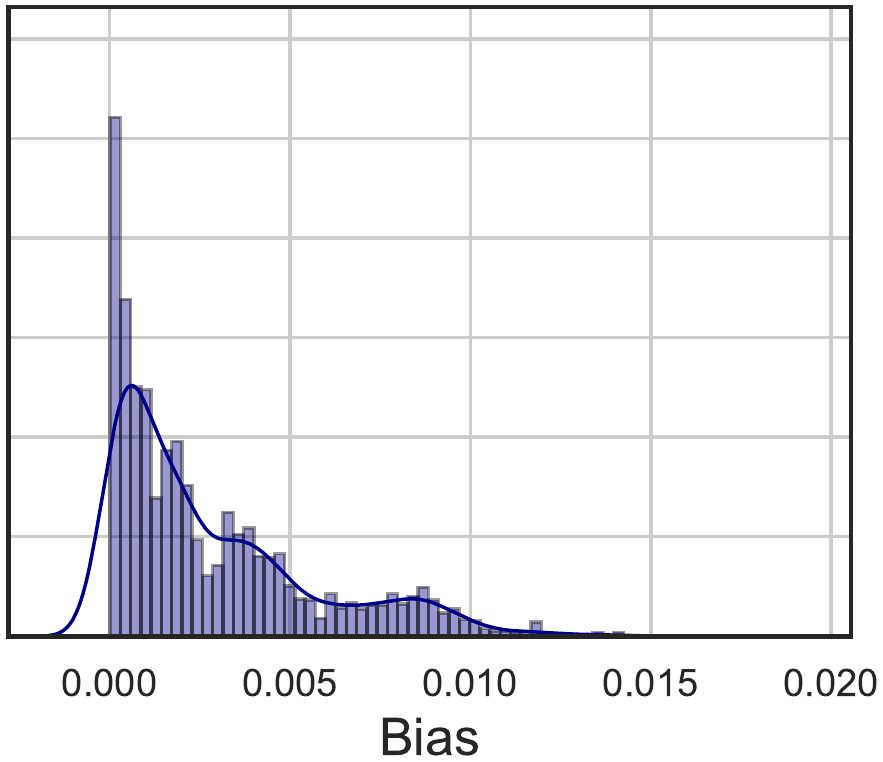}
         \caption{GNN with EP ($gnn\_ea$)}
         \label{fig:disc}
     \end{subfigure}
     \begin{subfigure}[t]{0.23\textwidth}
         \centering
         \includegraphics[width=\textwidth]{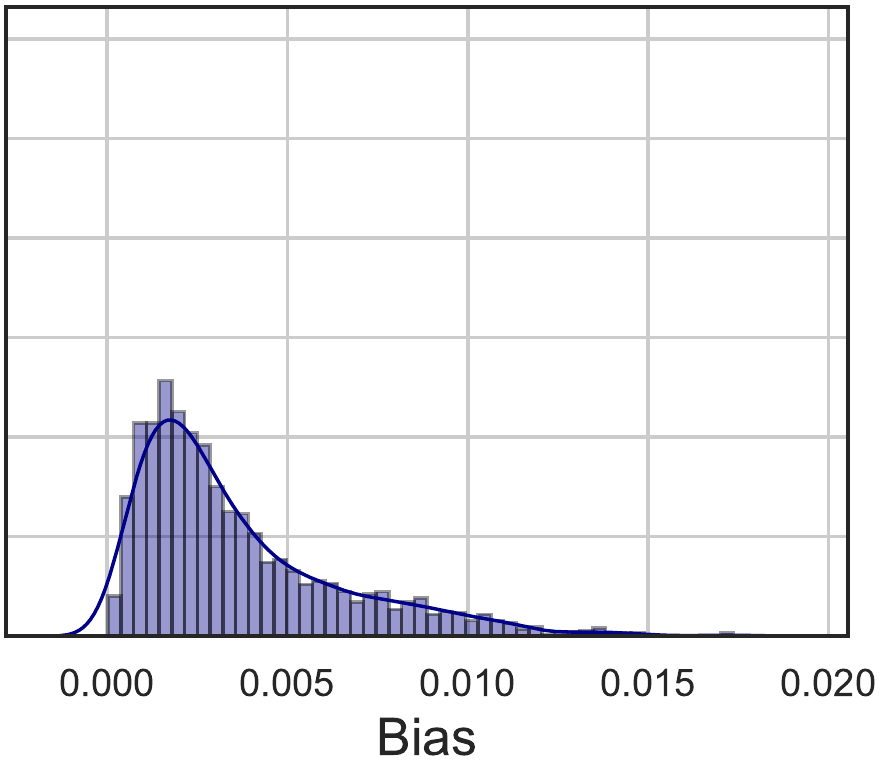}
         \caption{GNN with EP \& AM ($gnn\_ea)$}
         \label{fig:disd}
     \end{subfigure}

    \caption{The bias distribution of all nodes on the Cora dataset \wrt different augmentation strategies and encoder .
    }
    \label{fig:biasdistri}
\end{figure*}
\begin{table*}[t]
    \centering
    \caption{Results (Cora, CiteSeer, and WikiCS) given a common testbed for different feature augmentation strategies realized by Eq.~(\ref{eq:coverr}). The symbol $\bm{e}_i$ denotes the one-hot vector and $\mathcal{O}$ is the zero matrix.}
    \label{tab:diffaugmentation}
    \resizebox{\textwidth}{!}{
    \begin{tabular}{l|l|ll|ccc}
\toprule
\toprule
        \textbf{Feature Augmentation}  & \textbf{Type} & $\bm{P}\in\mathbb{R}^{k \times n}$ & $\bm{E}\in \mathbb{R}^{k \times d}$ & \textbf{Cora} & \textbf{CiteSeer} & \textbf{WikiCS} \\
        \midrule
        Gaussian Noise Injection & Stochastic &$\bm{P}=\bm{I}$ &  $\bm{E}\sim \mathcal{N} (0, 1)$ &$0.8271$ &$0.7134$ & $0.7823$\\
% \midrule
         SVD & Deterministic &$\bm{P} = \bm{U}^\top, \bm{X} = \bm{U} \bm{\Sigma} \bm{V}^\top$ &$\bm{E} = \mathcal{O}$ & $0.8269$ & $0.7142$ & $0.7814$\\
% \midrule
         Random Selection & Stochastic&$\bm{P}$ = $\{\bm{e}_{i} | \mathcal{P} (i)=\frac{\|\bm{X}_{i:}\|_2}{\|\bm{X}\|_F}\}$ & $\bm{E} = \mathcal{O}$ &$0.8245$ & $0.7121$ & $0.7811$\\
% \midrule
\rowcolor{LightCyan} Random Projection & Stochastic &
%          $\bm{P} = \bm{S}$, $S_{ij}=\left\{\begin{array}{r}
% \sqrt{s} \text { w/ } \frac{1}{2 s} \\
% 0 \text { w/ } 1-\frac{1}{s} \\
% -\sqrt{s} \text { w/ } \frac{1}{2 s}
% \end{array}$ 
$\bm{P}\sim \mathcal{N} (0, 1)$
     & $\bm{E} = \mathcal{O}$ &$\bm{0.8425}$& $\bm{0.7247}$ &  $\bm{0.7911}$\\
    \bottomrule
    \bottomrule
    \end{tabular}
    }
    % \vspace{-0.4cm}
\end{table*}
% \subsection{The Role of Data Augmentation in Graph Contrastive Learning}
% \subsection{Comparison with the State-of-the-art Methods}
\subsection{The Bias Problem of Graph Augmentation with GNNs (RQ1)}
In Section~\ref{sec:motivation}, we intuitively point out the pitfall of the topology GA by providing an illustrative example. Based on that observation, we hypothesize that the  topology GA  introduces a substantial bias into the node embeddings used by the contrastive loss, which deteriorates the quality of features from the pre-training step, thus affecting downstream tasks. In this section, we conduct a quantitative analysis of this problem.

\vspace{0.1cm}
\noindent\textbf{Experimental Protocol.}
We adopt the edge perturbation and attribute masking, the most commonly used strategies for the graph augmentation~\cite{zhu2020deep, zhu2021graph, velickovic2019deep}. To show the difference between the attribute GA and the topology GA, we use MLP and GNN to encode the original features. For a fair comparison, MLP and GNN share the fixed weights. We only change the type of augmentations used to obtain the node embedding. 
Specifically, We denote GNN with the edge perturbation (EP) as $gnn\_e$, GNN with attribute masking (AM) as $gnn\_a$, GNN with AM and EP as $gnn\_ea$, MLP with AM as $nn\_a$.
% All the other experimental factors are the same. As a result, we define four variants, $nn\_f, gnn\_f, gnn\_e, gnn\_ef$ corresponding to MLP with attribute masking, GNN with attribute masking, GNN with edge perturbation, and GNN with attribute masking plus edge perturbation, respectively.
For each node from Cora, we generate $500$ augmented samples in four variants to compute their bias by Eq.~(\ref{eq:bias}). 

\vspace{0.1cm}
\noindent\textbf{Bias Patterns of Graph Augmentations.}
%We draw the distribution of bias. Apparently, as shown in \
Figure~\ref{fig:biasdistri} shows  distributions containing bias. It is obvious that $gnn\_ea$ contains more node embeddings with a larger bias compared to $nn\_ea$, judging by the %from the fact that the 
distribution shift towards right. % toward higher bias.
This confirms our hypothesis that the graph augmentation introduces the bias. Moreover, %we note that 
the long-tailed trend is mainly caused by the edge perturbation. Figure~\ref{fig:bpdegree} shows that nodes with a small degree  exhibit more bias, which confirms our insight 
%The observation meets our expectation since, for a node, its embedding obtained via GNN is closely related to the node that it is connected to and, therefore, for a node with a small degree, 
that for node embeddings of low-degree nodes, obtained via GNN, removing any edges perturbs these embedding significantly. Real-world graphs (citation network, social network, \etc) follow the power-law distribution (shown in Figure~\ref{fig:powlaw}) meaning a few of nodes connect with the majority of the edges, whereas the majority of nodes have only a few of edges (low degree). For example, 54\%, 68\%, 71\%, 53\% nodes of Cora, CitSeer, PubMed, and DBLP datasets have less than 3 edges. Thus, applying the graph augmentation in %, the node embeddings used for 
contrastive learning results in a large bias.%loss are biased in general.

% \begin{table}[t]
%  \caption{Graph Aug. + self-contrast}
%     \label{tab:my_label}
%     \centering
%     \begin{tabular}{c|ccc}
%     \toprule
%     \toprule
%          Method & Cora & Citeseer & WikiCS\\
%     \midrule
%          Graph Aug. + self-contrast&  82.1 & 70.93 &  77.67\\
%     \bottomrule
%     \bottomrule
%     \end{tabular}
% \end{table}
\begin{table*}[t]
\centering
\caption{Node classification in terms of accuracy (\%) with standard deviation. The highest performance is highlighted in boldface. COSTA$_{MV}$ and COSTA$_{SV}$ denote the variants of multi-view and single-view setting respectively, OOM indicates Out-Of-Memory.}
\label{tab:mainresult1}
\resizebox{0.9\textwidth}{!}{
\begin{tabular}{lcccccc}
% \toprule
\toprule 
\textbf{Method} & \textbf{Training Data} & \textbf{Wiki-CS} & \textbf{Amazon-Computers} & \textbf{Amazon-Photo} & \textbf{Coauthor-CS} & \textbf{Coauthor-Physics} \\
\midrule Raw features & $\bm{X}$ & $71.98 \pm 0.00$ & $73.81 \pm 0.00$ & $78.53 \pm 0.00$ & $90.37 \pm 0.00$ & $93.58 \pm 0.00$ \\
Node2vec & $\bm{A}$ & $71.79 \pm 0.05$ & $84.39 \pm 0.08$ & $89.67 \pm 0.12$ & $85.08 \pm 0.03$ & $91.19 \pm 0.04$ \\
DeepWalk & $\bm{A}$ & $74.35 \pm 0.06$ & $85.68 \pm 0.06$ & $89.44 \pm 0.11$ & $84.61 \pm 0.22$ & $91.77 \pm 0.15$ \\
DeepWalk  & $\bm{X}, \bm{A}$ & $77.21 \pm 0.03$ & $86.28 \pm 0.07$ & $90.05 \pm 0.08$ & $87.70 \pm 0.04$ & $94.90 \pm 0.09$ \\
% \midrule 
GAE     & $\bm{X}, \bm{A}$ & $70.15 \pm 0.01$ & $85.27 \pm 0.19$ & $91.62 \pm 0.13$ & $90.01 \pm 0.71$ & $94.92 \pm 0.07$ \\
VGAE    & $\bm{X}, \bm{A}$ & $75.63 \pm 0.19$ & $86.37 \pm 0.21$ & $92.20 \pm 0.11$ & $92.11 \pm 0.09$ & $94.52 \pm 0.00$ \\
DGI     & $\bm{X}, \bm{A}$ & $75.35 \pm 0.14$ & $83.95 \pm 0.47$ & $91.61 \pm 0.22$ & $92.15 \pm 0.63$ & $94.51 \pm 0.52$ \\
GMI     & $\bm{X}, \bm{A}$ & $74.85 \pm 0.08$ & $82.21 \pm 0.31$ & $90.68 \pm 0.17$ & OOM              & OOM              \\
MVGRL   & $\bm{X}, \bm{A}$ & $77.52 \pm 0.08$ & $87.52 \pm 0.11$ & $91.74 \pm 0.07$ & $92.11 \pm 0.12$ & $95.33 \pm 0.03$ \\
GRACE   & $\bm{X}, \bm{A}$ & $78.31 \pm 0.05$ & $87.80 \pm 0.23$ & $92.53 \pm 0.16$ & $\mathbf{92.95 \pm 0.03}$ & $95.72 \pm 0.03$ \\
GCA     & $\bm{X}, \bm{A}$ & $78.23 \pm 0.04$ & $87.54 \pm 0.49$ & $92.24 \pm 0.21$ & $92.95 \pm 0.13$ & $95.73 \pm 0.03$ \\
G-BT  & $\bm{X}, \bm{A}$ & $76.83 \pm 0.73$ & $87.93 \pm 0.36$ & $92.46 \pm 0.35$ & $92.91 \pm 0.25$ & $95.25 \pm 0.13$ \\
\midrule
\rowcolor{LightCyan} COSTA$_{SV}$ & $\bm{X}, \bm{A}$ & $79.03 \pm 0.05$ & $88.26 \pm 0.03$ & $92.30 \pm 0.25$ & $\mathbf{92.95} \pm 0.12$ & $\bm{95.74 \pm 0.02}$ \\
\rowcolor{LightCyan} COSTA$_{MV}$ & $\bm{X}, \bm{A}$ & $\mathbf{79.12 \pm 0.02}$ & $\mathbf{88.32 \pm 0.03}$ & $\mathbf{92.56 \pm 0.45}$ & $92.94 \pm 0.10$ & $95.60 \pm 0.02$ \\
\bottomrule
% \bottomrule
% GCN & $\bm{X}, \bm{A}, \bm{Y}$ & $77.19 \pm 0.12$ & $86.51 \pm 0.54$ & $92.42 \pm 0.22$ & ${93.03 \pm 0.31}$ & ${95.65 \pm 0.16}$ \\
% GAT & $\bm{X}, \bm{A}, \bm{Y}$ & ${77.65 \pm 0.11}$ & ${86.93 \pm 0.29}$ & ${92.56 \pm 0.35}$ & $92.31 \pm 0.24$ & $95.47 \pm 0.15$ \\
% \midrule
\end{tabular}
}
\end{table*}
\begin{figure*}[t] 
  \begin{minipage}[h]{0.35\textwidth} 
    \includegraphics[width=\textwidth]{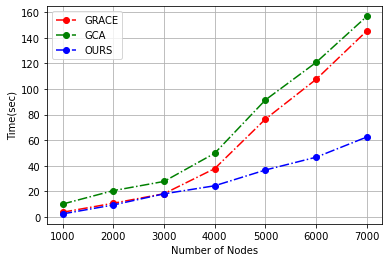}
    \caption{The runtime comparison of single-view COSTA (COSTA$_{SV}$) \vs MV-GCL models such as GCA and GRACE.}
    \label{fig:runningtime}
  \end{minipage}% 
  \hspace{0.1cm}
 \begin{minipage}[h]{0.55\textwidth} 
    % \centering
    \resizebox{\textwidth}{!}{
\begin{tabular}{lccccc}
% \toprule
\toprule
\textbf{Method} & \textbf{Training Data} & \textbf{Cora} & \textbf{Citeseer} & \textbf{Pubmed} & \textbf{DBLP} \\
\midrule Raw features & $\bm{X}$ & $64.8$ & $64.6$ & $84.8$ & $71.6$ \\
node2vec & $\bm{A}$ & $74.8$ & $52.3$ & $80.3$ & $78.8$ \\
DeepWalk & $\bm{A}$ & $75.7$ & $50.5$ & $80.5$ & $75.9$ \\
DeepWalk  & $\bm{X}, \bm{A}$ & $73.1$ & $47.6$ & $83.7$ & $78.1$ \\
% \midrule 
GAE & $\bm{X}, \bm{A}$ & $76.9$ & $60.6$ & $82.9$ & $81.2$ \\
VGAE & $\bm{X}, \bm{A}$ & $78.9$ & $61.2$ & $83.0$ & $81.7$ \\
DGI & $\bm{X}, \bm{A}$ & $82.6 \pm 0.4$ & $68.8 \pm 0.7$ & $86.0 \pm 0.1$ & $83.2 \pm 0.1$ \\
GRACE & $\bm{X}, \bm{A}$ & ${83.3} \pm {0.4}$ & ${72.1\pm 0.5}$ & $\bm{8 6 . 7 \pm 0 . 1}$ & ${8 4 .2}\pm{0.1}$ \\
GCA & $\bm{X}, \bm{A}$ & ${82.8} \pm {0.3}$ & ${71.5\pm 0.3}$ & ${8 6 . 0 \pm 0 . 2}$ & ${8 3 .1}\pm{0.2}$ \\
\midrule
\rowcolor{LightCyan} COSTA$_{SV}$ & $\bm{X}, \bm{A}$ & $84.3\pm0.3 $ & $72.8\pm 0.3$ & ${86.2\pm0.1}$ & ${84.3}\pm{0.1}$ \\
\rowcolor{LightCyan} COSTA$_{MV}$ & $\bm{X}, \bm{A}$ & $\bm{84.3 \pm 0.2}$ & $\bm{72.9\pm 0.3}$ & $86.0\pm0.2$ & $\bm{84.5\pm0.1}$ \\
\bottomrule
% \bottomrule
% SGC & $\bm{X}, \bm{A}, \bm{Y}$ & $80.6$ & $69.1$ & $84.8$ & $81.7$ \\
% $\bm{G C N}$ & $\bm{X}, \bm{A}, \bm{Y}$ & $82.8$ & $72.0$ & $84.9$ & $82.7$ \\
% \midrule
\end{tabular}
}
\captionof{table}{Results on Cora, CiteSeer, PubMed, and DBLP. We use the same setting as the setting used for experiments in  Table~\ref{tab:mainresult1}.}
\label{tab:mainresult2}
\end{minipage}
\end{figure*}
\subsection{Comparison with the State-of-the-Art Methods (RQ2)}
In this section, we compare COSTA to other baseline models to answer \textbf{RQ2}. We use the same experimental setup as the representative MV-GCL method (\ie, GCA~\cite{zhu2021graph}, and GRACE~\cite{zhu2020deep}) to perform a fair comparison to these methods. %Without special statements, 
Unless stated otherwise, the random projection is employed as the default setting of COSTA as it balances well between accuracy and efficiency. A detailed comparison between different feature augmentations given different matrix sketching schemes is shown in Section~\ref{sec:otherfa}.

\vspace{0.1cm}
\noindent\textbf{Datasets.}
To evaluate our method, we adopt nine commonly used benchmark datasets in the previous works~\cite{zhu2020deep, zhu2021graph,velickovic2019deep}, including citation networks (Cora, CiteSeer, Pubmed, DBLP, Coauthor-CS, and Coauthor-Physics) and social networks (Wiki-CS, Amazon-Computers, Amazon-Photo)~\cite{kipf2016semi,sinha2015overview,mcauley2015image, mernyei2020wiki}. Detailed descriptions and statistics are given in Appendix~\ref{sec:dataset}. Apart from Wiki-CS adopting the public split, other datasets are randomly divided into 10\%, 10\%, 80\% for training, validation, and testing.

\vspace{0.1cm}
\noindent\textbf{Evaluation Protocol.} For each experiment, we adopt the same %linear 
evaluation scheme as in~\cite{velickovic2019deep, zhu2020deep, zhu2021graph}, where each model is firstly trained in an unsupervised manner on the whole graph with node features. Then, we transform the raw features into the resulting embeddings with the use of the trained encoder. Next, we train an $\ell_2$-regularized logistic regression classifier from the Scikit-Learn library~\cite{pedregosa2011scikit} with the use of embeddings obtained in the previous step. We also perform a grid search over the regularization parameter with the following values $\{2^{-10}, 2^{-9}, \dots , 2^{-1}\}$. 
We compute the classification accuracy and report the mean and standard deviations for 20 model initializations and splits. 

\vspace{0.1cm}
\noindent\textbf{Baselines.} 
To compare COSTA with previous works, we choose the representative baselines from traditional graph self-supervised learning, autoencoder-based model, and contrastive-based graph self-supervised learning.  Methods include (i) Random walk based models: Deepwalk~\cite{perozzi2014deepwalk} and node2vec~\cite{grover2016node2vec}, (ii) Autoencoder Based models: GAE and VGAE~\cite{kipf2016variational}, (iii) the contrastive-based models including Deep Graph Infomax (DGI)~\cite{velickovic2019deep}, Graphical Mutual Information Maximization (GMI)~\cite{peng2020graph}, Graph Barlow Twins (G-BT)~\cite{bielak2021graph} and Multi-View Graph Representation Learning (MVGRL)~\cite{hassani2020contrastive}, GRACE~\cite{zhu2020deep} and GCA~\cite{zhu2021graph}. Note that all the contrastive models use the topology graph augmentation by default. 
% Furthermore, To directly compare our proposed method with supervised counterparts, we also report the performance of two representative models Graph Convolutional Networks (GCN)\cite{kipf2016semi} and Graph Attention Networks (GAT) \cite{velivckovic2017graph}, where they are trained in an end-to-end manner. 
For all baselines, we report their performance based on the official implementations and we use  use default hyper-parameters from original papers.

\vspace{0.1cm}
\noindent\textbf{Main Results.}
Tables~\ref{tab:mainresult1} and \ref{tab:mainresult2} %~3 
show that COSTA achieves competitive performance compared to the baseline methods, and even surpasses them on most datasets. These results demonstrate that COSTA is an effective framework leveraging the advantage of feature augmentations. Specifically, the superiority of COSTA is confirmed by the fact that both single- and multi-view COSTA variants, COSTA$_{SV}$ and COSTA$_{MV}$, outperform several MV-GCL models that use the topology graph augmentation (\ie, GCA, GRACE, MVGRL) on several datasets (Cora, CiteSeer, DBLP, Wiki-CS, Amazon-Computers, AM-Photo and Coauthor-Physics) and achieve comparable results on PubMed, and Coauthor-CS datasets. We note that  datasets on which COSTA$_{SV}$ does not achieve SOTA have a small number of nodes with low node degrees (\ie, only around 10\% of nodes in Amazon-photo and Coauthor-Physics have the degree less than 3, meaning less bias is introduced by the topology GA. However, COSTA$_{SV}$ requires less runtime and memory  to achieve the comparable performance. This is attributed to the single-view design (SV-GCL) of COSTA$_{SV}$ (Section~\ref{sec:SVGCL}). We also note that COSTA$_{MV}$ typically outperforms COSTA$_{SV}$ by a small margin which suggests that single-view augmentation strategies are a good choice for GA.   In addition, we note that most of the contrastive learning models outperform models based on the reconstruction error (\ie, GAE, VGAE, DeepWalk, Node2Vec), which reflects the superiority of contrastive learning.

\vspace{0.1cm}
\noindent\textbf{Running time.} 
We measure the runtime to further validate the practicality of the single-view COSTA (COSTA$_{SV}$) in terms of time complexity. We mainly compare it to GCA and GRACE (Amazon-Computers dataset) because both GCA and GRACE are the representative models for the multi-view contrastive learning framework utilizing graph augmentations. Note that  GCA uses adaptive graph augmentations. To form  Figure~\ref{fig:runningtime}, we sampled a subgraph with a fixed number of nodes from $1,000$ to $8,000$. Figure~\ref{fig:runningtime} shows the training time for 1,000 epochs given  different numbers of nodes. The figure shows that our method is faster than the other two models. What stands out is that the gap between them becomes more apparent as the number of nodes increases. COSTA$_{SV}$ becomes 2 times faster than the other models at $\geq 5,000$ nodes. We attribute this to the single-view setting with the feature augmentation. Note that COSTA$_{SV}$  computes the node feature matrix once before feeding it into the projection head, and the feature augmentation %lets us  downsample the augmented features whose number is less than the number of nodes. %These two facts lead to the computation of 
effectively can be understood as reducing the number of nodes, as our feature augmentation acts on columns of hidden feature matrix. 
Thus, we form one relatively small similarity matrix for the contrastive loss. 
% in Eq.~(\ref{eq:svgclloss})
In contrast, the MV-GCL framework incurs a higher complexity. Our experiments confirm that COSTA$_{SV}$ is  efficient in practice. Moreover, COSTA$_{SV}$ can be accelerated by employing sketching by a sparse matrix without sacrificing its performance, as shown in Figure~\ref{fig:density} and Appendix~\ref{sec:vsrp}.

\subsection{Ablations/Performance Analysis (RQ3 \& 4)}
% In this section, we performed set of qualitative analyses on in order to better understand the properties of feature augmentation. 
\begin{figure*}[t] 
  \begin{minipage}[c]{0.31\textwidth} 
    \includegraphics[width=\textwidth]{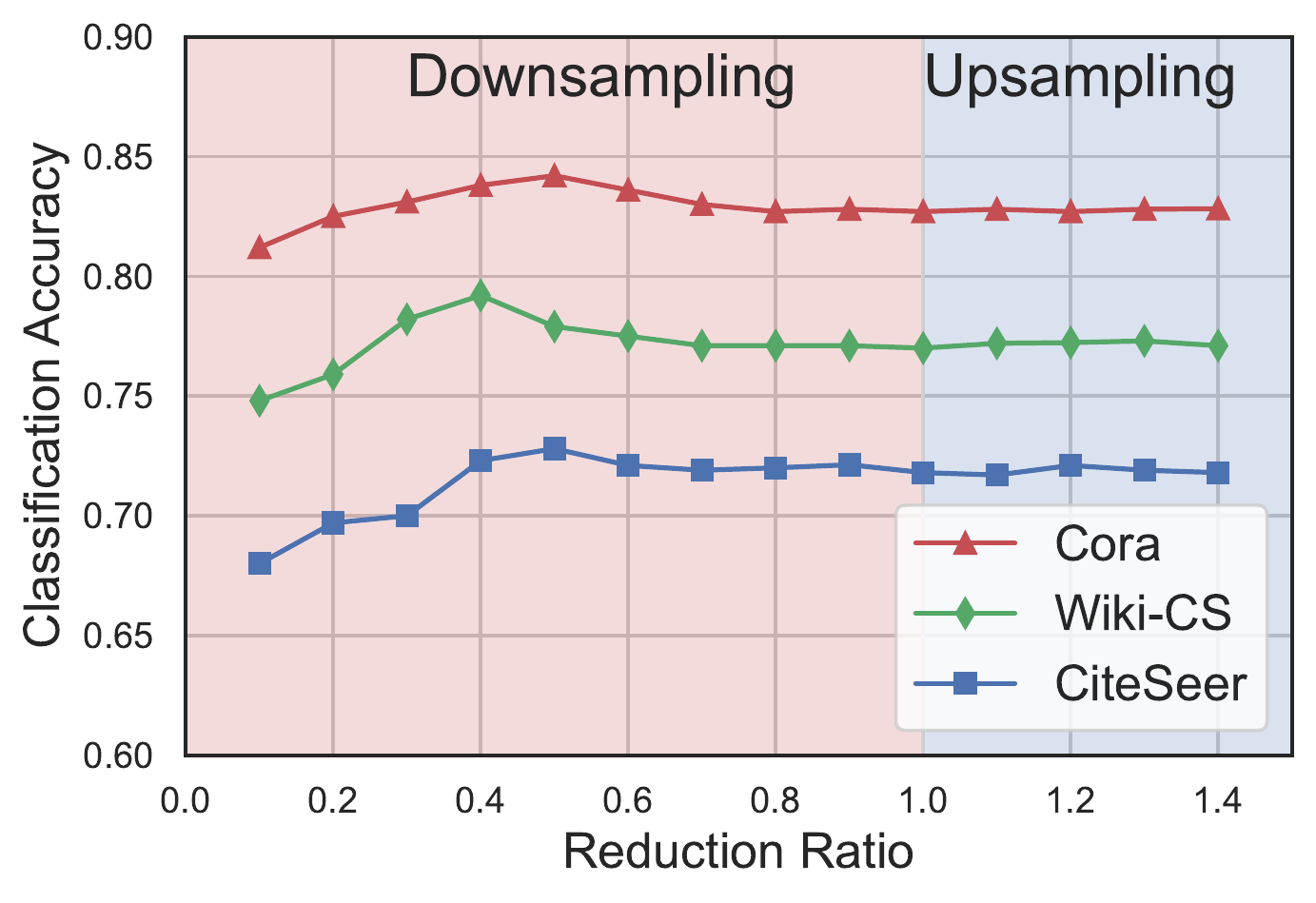}
    \caption{Accuracy \wrt sketching ratios.}
    \label{fig:reduction}
  \end{minipage}% 
  \hspace{0.1cm}
  \begin{minipage}[c]{0.31\textwidth} 
   \includegraphics[width=\textwidth]{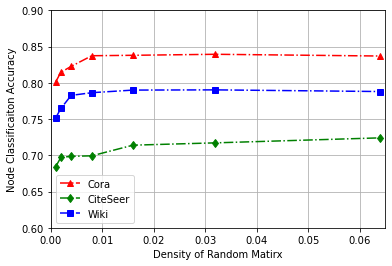}
    \caption{Accuracy \wrt  matrix densities.}
    \label{fig:density}
  \end{minipage}% 
  \hspace{0.1cm}
  \begin{minipage}[c]{0.32\textwidth} 
     \includegraphics[width=\textwidth]{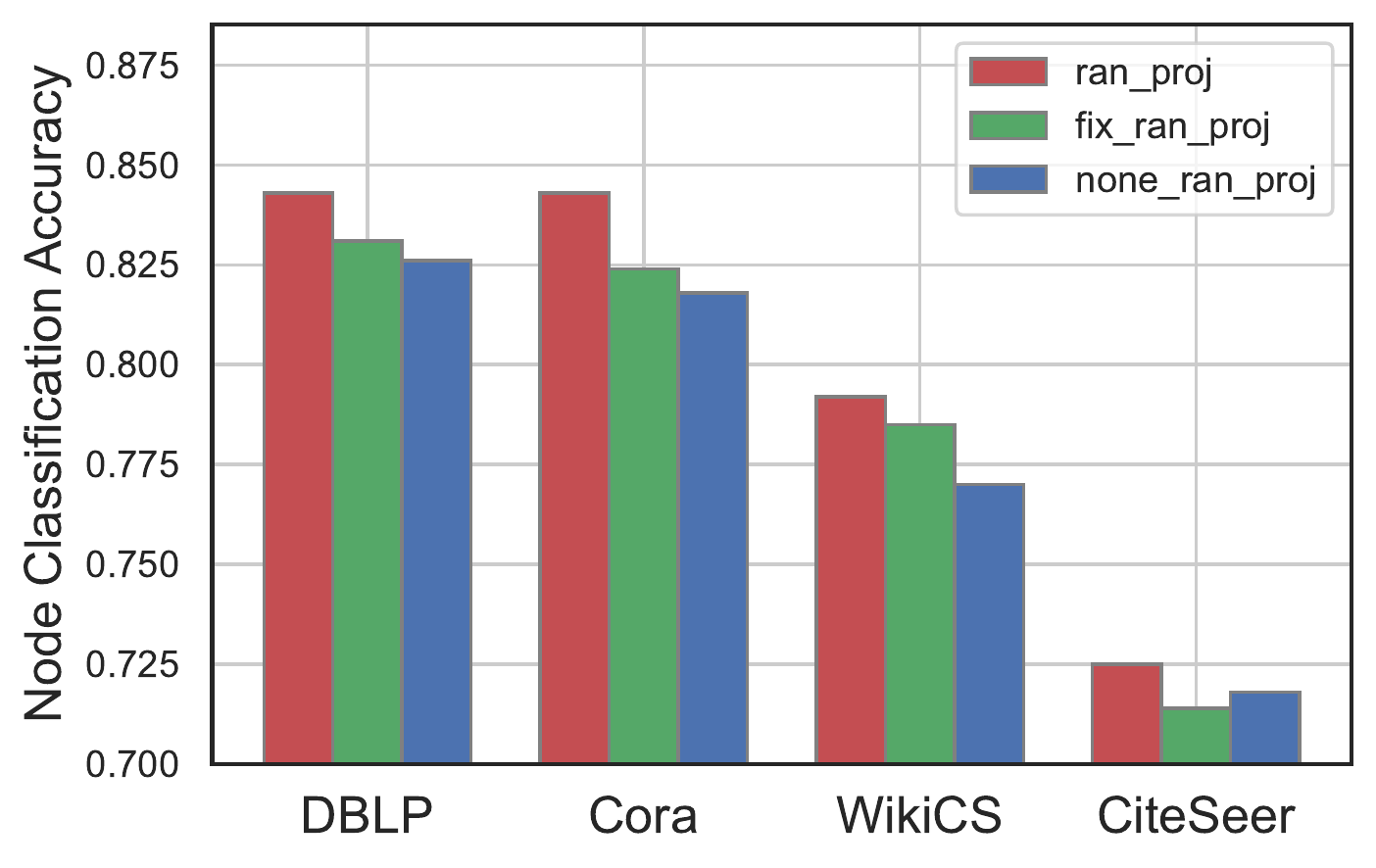}
    \caption{Ablation study on different variants of random projection.}
    \label{fig:ablation}
  \end{minipage}% 
\end{figure*}
Below we use the single-view COSTA (COSTA$_{SV}$), as multi-view COSTA has a similar performance. 
We firstly show the superiority of %COSTA with 
random projection by comparing it with other matrix sketching variants. Subsequently, we ablate and discuss the factors that lead to the success of the random projection. Finally, we show the effect of the number of augmented samples that influence the error bound.

\vspace{0.1cm}
\noindent\textbf{Comparison with Other Feature Augmentations.}
\label{sec:otherfa}
We compare the random projection with other matrix sketching strategies. Table~\ref{tab:diffaugmentation} uses the common tested based on COSTA$_{SV}$: only the definition of $\bm{P}$ and $\bm{E}$ are varied. We note that the random projection  works consistently better than all the other strategies as the random projection introduces a lesser error compared to the random selection and random noise strategies. It is somewhat surprising that although the random projection is not the optimal solution to Eq.~(\ref{eq:coverr}), % with minimal error, 
it still outperforms the SVD-based  sketching. Such a good performance comes from the following facts: (i) the error bound of random projection is sufficiently small to maintain a good sketch; and (ii) compared to SVD, which is deterministic, random projection adds stochastic perturbations to the model (variance is a source of feature augmentations), which serves as a regularization. % that prevents the risk of overfitting.

\begin{table}[t]
% \vskip 0.15in
\begin{center}
% \begin{small}
% \begin{sc}
\caption{Ablation study: the topology graph augmentation  \vs the feature augmentation (single-view COSTA).}
\label{tab:AblationStudyOnAug}
\begin{tabular}{cc|ccc}
% \toprule
\toprule
Graph Aug. & Feature Aug. & \textbf{WikiCS} & \textbf{Cora} & \textbf{CiteSeer} \\
\midrule 
$\times$     &$\times$     &$74.31$ &$80.04$   &$71.35$\\
$\checkmark$ &$\times$     &$77.67$ &$82.13$   &$71.93$\\
$\times$     &$\checkmark$ &$79.03$ &$84.30$   &$72.81$\\
\rowcolor{LightCyan} $\checkmark$ &$\checkmark$ &$\mathbf{79.07}$ &$\mathbf{84.33}$   &$\mathbf{72.90}$\\
\bottomrule
% \bottomrule
\end{tabular}
% \end{sc}
% \end{small}
\end{center}
\end{table}

\vspace{0.1cm}
\noindent\textbf{Ablations \wrt the Augmentation Type in GCL.}
Below, we investigate different augmentation types, \ie, the topology graph augmentation and the feature augmentation. To minimize other factors other than the augmentation strategy that might affect the results (\ie, multi-view setting), we opt for the single-view COSTA. %setting where only one graph view is used. % for GCL. 
We only replace the augmentation type. Table~\ref{tab:AblationStudyOnAug} shows that  without any augmentations, the model performs badly, showing the necessity of data augmentations. Furthermore, we observe that applying either the topology graph augmentation or the feature augmentation can improve results. However, the improvement of feature augmentation is larger compared to the topology graph augmentation, highlighting  the effectiveness of COSTA. %This is probably because the topology graph augmentation affects the eigenspace alignment of graph views~\cite{DBLP:journals/corr/abs-2106-12484} thus, highly rely on the multi-view setting. 
%We also notice the improvements are minor using both types of augmentations, again strengthening  the argument for the effectiveness of the feature augmentation.

\vspace{0.1cm}
\noindent\textbf{Ablations on the Random Projection.}
To see where the performance improvement comes from, we conduct an ablation study on the COSTA$_{SV}$ with the random projection. Firstly, we  fix the random projection so that it remains the same in each epoch, denoting this variant as $ran\_proj\_fix$, and then we eliminate the random projection completely ($none\_ran\_proj$). We experiment on Cora, CiteSeer, PubMed, and DBLP. Figure~\ref{fig:ablation} shows that fixing the random projection throughout the experiment causes the performance drop in all four datasets. In contrast, forming a new random projection matrix per epoch generates a variety of  feature augmentations obeying the variance bound of random projection. As projecting is performed along columns of the hidden feature matrix, this is an equivalent of drawing different new nodes (obeying the mean and variance) with each change of random projection. %, which has a regularization which improves the generalization. 
Notably, removing the random projection completely from training  decreases the accuracy on three datasets, as expected. % indicating the necessity of the feature augmentation in SV-GCL.

\vspace{0.1cm}
\noindent\textbf{Downsampling \vs Upsampling the Node Dimension.}
\label{sec:downsampling}
According to the error bound derived in Eq.~(\ref{eq:rpbound}), the bound is related to the number of augmented samples $k$. The use of COSTA lets us control the number of augmented features by adjusting the number of rows of $\bm{P} \in \mathbb{R}^{k\times n}$ in Eq.~(\ref{eq:coverr}). Downsampling and upsampling are applied by setting $k<|\mathcal{V}|$ and $k>|\mathcal{V}|$ respectively. We denote the $\frac{k}{|\mathcal{V}|}$ as the reduction ratio. Figure~\ref{fig:reduction} shows the relation between the performance and the reduction ratio. Specifically, we obtain the best performance for downsampling (the sketched rows are fewer than the number of nodes), which also accelerates computations of the contrastive loss (smaller similarity/dissimilarity matrices).

\section{Conclusions}
We have quantitatively and qualitatively analyzed the problems stemming from the topology graph augmentation of current GCL methods, and we have shown that such a strategy suffers from the bias problem. To overcome this bias, we have proposed the feature augmentation framework COSTA. We theoretically proved that the quality of augmented features obtained via COSTA are guaranteed and COSTA accelerates the speed of GCL by working well in the single-view mode. Our results are equivalent or better  than results of the standard   contrastive multi-view graph augmentation models that rely on topology-based augmentations.
% \end{itemize}
%***************************************************

\section{Acknowledgments}
%We thank anonymous reviewers for their valuable comments. 
This work was supported by the National Key Research and Development Program of China (No. 2018AAA0100204) and the Research Grants Council of the Hong Kong Special Administrative Region, China (CUHK 2410021, Research Impact Fund, No. R5034-18).

\bibliographystyle{plain}
\bibliography{kdd}
%\newpage
\appendix

\vspace{0.3cm}
\section*{------ Appendices ------}

\vspace{0.3cm}

\section{Error Bounds of Different Feature Augmentations Preserving Second-order Statistics}
% \subsection{Proof of Lemma~\ref{lemma:svd}}
% \subsection{Lemma Proof}
\noindent\textbf{Proof of Lemma~\ref{lemma:svd} (Matrix Sketching via SVD).}
\label{proof:svd}
\begin{proof}
According to the Eckart-Young-Mirsky theorem~\cite{golub1987generalization},  $\bm{X}_k = \bm{P}^\top \bm{P} \bm{X}$ is  the best $k$-rank approximation of $\bm{X}$ and  $\|\bm{X}-\bm{X}_k\|_2 = \sigma_{k+1}$. Thus, we have:
\begin{equation*}
\begin{aligned}
    \|\bm{X}^\top \bm{X} - \tilde{\bm{X}}^\top\tilde{\bm{X}})\|_2 &= \|\bm{X}^\top \bm{X} - \bm{X}^\top \bm{P}^\top \bm{P}\bm{X}\|_2, \\
    & = \|\bm{X}^\top(\bm{X} - \bm{P}^\top\bm{P}\bm{X})\|_2 = \|\bm{X}^\top(\bm{X} - \bm{X}_k)\|_2,\\
    & \leq \|\bm{X}^\top\|_2\| \bm{X} - \bm{X}_k\|_2 = \|\bm{X}\|_2^2 \frac{\|\bm{X} - \bm{X}_k\|_2}{\|\bm{X}\|_2}, \\
    &\leq \frac{\sigma_{k+1}}{\sigma_1} \|\bm{X}\|^2_F = \frac{\sigma_{k+1}}{\sigma_1} \operatorname{Tr} (\bm{X}^\top\bm{X}).
\end{aligned}
\end{equation*}
\end{proof}
\noindent\textbf{Proof of Lemma~\ref{lemma:randomselect} (Random Row Selection).}
\label{proof:randomselect}
To prove this Lemma, we use the following theorem from~\cite{drineas2006fast}.
\begin{theorem}
Let $\bm{A} \in \mathbb{R}^{d \times n}, \bm{B} \in \mathbb{R}^{r \times d}$ and $k \in \mathbb{Z}^{+}$such that $1 \leq k \leq n$ and $\left\{p_{i}\right\}_{i=1}^{n}$ be probability distribution over rows of $\bm{A}$ and columns of $\bm{B}$ such that $p_{i} \geq \frac{\beta \|\bm{A}_{i:}\|_2 \| \bm{B}_{i:} \|_2}{\sum_{j=1}^{n}\| \bm{A}_{j:}\|_2 \|\bm{A}_{j:} \|_2}$ for some positive constant $\beta \leq 1$. If matrix $C \in \mathbb{R}^{d \times k}$ is constructed by sampling columns of $A$ according to $\left\{p_{i}\right\}_{i=1}^{n}$ and matrix $\bm{D} \in \mathbb{R}^{k \times r}$ is constructed by picking same rows of $\bm{B}$, then with probability at least $1-\delta$:
$$
\|\bm{AB}-\bm{CD}\|_{F}^{2} \leq \frac{\mu^{2}}{\beta c}\|\bm{A}\|_{F}^{2}\|\bm{B}\|_{F}^{2},
$$
where $\delta \in(0,1), \mu=1+\sqrt{(8 / \beta) \log (1 / \delta)}$.
\label{theorem:1}
\end{theorem}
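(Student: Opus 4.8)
The statement is the approximate matrix multiplication bound of Drineas--Kannan--Mahoney, and the plan is to prove it by the standard three‑step template: build an unbiased Monte Carlo estimator of $\bm{AB}$, control its second moment, and then concentrate it around its expectation. Throughout, write $\bm{A}^{(i)}$ for the $i$‑th column of $\bm{A}$ and $\bm{B}_{(i)}$ for the $i$‑th row of $\bm{B}$, so that $\bm{AB}=\sum_{i=1}^n \bm{A}^{(i)}\bm{B}_{(i)}$, and let $c$ denote the number of sampled indices (written $k$ in the statement). The sampling scheme draws indices $i_1,\dots,i_c$ i.i.d.\ from $\{p_i\}_{i=1}^n$, scales each selected column/row by $1/\sqrt{c\,p_{i_t}}$, and stacks them into $\bm{C}$ and $\bm{D}$, so $\bm{CD}=\tfrac{1}{c}\sum_{t=1}^c \bm{A}^{(i_t)}\bm{B}_{(i_t)}/p_{i_t}$.

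\emph{Steps 1 and 2 (unbiasedness and variance).} Since $\mathbb{E}\bigl[\,\bm{A}^{(i_t)}\bm{B}_{(i_t)}/p_{i_t}\,\bigr]=\sum_i \bm{A}^{(i)}\bm{B}_{(i)}=\bm{AB}$, a one‑line calculation gives $\mathbb{E}[\bm{CD}]=\bm{AB}$. Because the $c$ summands are i.i.d., $\mathbb{E}\|\bm{AB}-\bm{CD}\|_F^2 = \tfrac{1}{c}\bigl(\sum_{i=1}^n \|\bm{A}^{(i)}\|_2^2\,\|\bm{B}_{(i)}\|_2^2/p_i - \|\bm{AB}\|_F^2\bigr)$. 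Discarding the non‑negative second term and inserting the hypothesis $p_i \ge \beta\,\|\bm{A}^{(i)}\|_2\|\bm{B}_{(i)}\|_2 / \sum_j \|\bm{A}^{(j)}\|_2\|\bm{B}_{(j)}\|_2$ yields $\sum_i \|\bm{A}^{(i)}\|_2^2\|\bm{B}_{(i)}\|_2^2/p_i \le \beta^{-1}\bigl(\sum_j \|\bm{A}^{(j)}\|_2\|\bm{B}_{(j)}\|_2\bigr)^2$, and Cauchy--Schwarz bounds the last quantity by $\beta^{-1}\|\bm{A}\|_F^2\|\bm{B}\|_F^2$. Hence $\mathbb{E}\|\bm{AB}-\bm{CD}\|_F^2 \le \tfrac{1}{\beta c}\|\bm{A}\|_F^2\|\bm{B}\|_F^2$, and Jensen's inequality gives $\mathbb{E}\|\bm{AB}-\bm{CD}\|_F \le (\beta c)^{-1/2}\|\bm{A}\|_F\|\bm{B}\|_F$.

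\emph{Step 3 (concentration).} Regard $F(i_1,\dots,i_c)=\|\bm{AB}-\bm{CD}\|_F$ as a function of the independent indices. Replacing one index changes $\bm{CD}$ by a single term, and $\|\bm{A}^{(i)}\bm{B}_{(i)}\|_F/(c\,p_i)=\|\bm{A}^{(i)}\|_2\|\bm{B}_{(i)}\|_2/(c\,p_i)\le \beta^{-1}\|\bm{A}\|_F\|\bm{B}\|_F/c$ by the same lower bound on $p_i$ together with Cauchy--Schwarz; so by the triangle inequality $F$ has bounded differences with constant $\Delta \le 2\beta^{-1}\|\bm{A}\|_F\|\bm{B}\|_F/c$. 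A bounded‑differences (Azuma--Hoeffding) martingale inequality then gives $\mathcal{P}\bigl(F \ge \mathbb{E}F + t\bigr) \le \exp\!\bigl(-t^2/(2c\Delta^2)\bigr)$; setting the right‑hand side equal to $\delta$ and adding the bound on $\mathbb{E}F$ from Step 2 produces $F \le (\beta c)^{-1/2}\bigl(1+\sqrt{(8/\beta)\log(1/\delta)}\bigr)\|\bm{A}\|_F\|\bm{B}\|_F$ with probability at least $1-\delta$, which is the claim after squaring, with $\mu=1+\sqrt{(8/\beta)\log(1/\delta)}$.

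The routine parts are the unbiasedness check and the variance computation. The main obstacle is Step 3: one must bound the per‑index sensitivity of $\|\bm{AB}-\bm{CD}\|_F$ \emph{uniformly} over the support of the sampling distribution — this is exactly where the lower bound on $p_i$ is reused — and then select a martingale‑concentration inequality whose constants are tracked carefully, so that the $8$ inside $\mu$ comes out exactly (a tighter McDiarmid bound would give a smaller constant; the stated $8$ matches the Azuma--Hoeffding form above combined with $\Delta=2\beta^{-1}\|\bm{A}\|_F\|\bm{B}\|_F/c$).
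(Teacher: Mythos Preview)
The paper does not actually prove this theorem: it is stated verbatim as a quotation from Drineas, Kannan and Mahoney (2006) and used as a black box to derive Lemma~\ref{lemma:randomselect}. So there is no ``paper's own proof'' to compare against here.

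Your sketch is the standard Drineas--Kannan--Mahoney argument and is correct in outline: unbiasedness of the rescaled estimator, the variance bound via the near-optimal sampling probabilities and Cauchy--Schwarz, and then a bounded-differences concentration step on $\|\bm{AB}-\bm{CD}\|_F$. You also correctly identified that the lower bound on $p_i$ is used twice (once for the variance, once for the per-coordinate sensitivity), and your tracking of constants reproduces the stated $\mu=1+\sqrt{(8/\beta)\log(1/\delta)}$ under the Azuma--Hoeffding form $\exp(-t^2/(2c\Delta^2))$. One small caveat: the statement as printed in the paper has several typos (the dimensions of $\bm{B}$, the indices in the denominator of the $p_i$ condition, and the symbol $c$ versus $k$); you have silently corrected all of these to their intended meanings, which is the right thing to do.
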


We set $\bm{A} = \bm{X}^\top \in \mathbb{R}^{d\times n}$, $\bm{B} = \bm{X} \in \mathbb{R}^{n \times d}$, $\bm{C} = \tilde{\bm{X}}^\top \in \mathbb{R}^{d \times k}$ and $\bm{D} = \tilde{\bm{X}} \in \mathbb{R}^{k \times d}$. Note that the distribution $p_{i} = \frac{\beta \|\bm{A}_{i:}\|_2 \| \bm{B}_{i:} \|_2}{\sum_{j=1}^{n}\|\bm{A}_{j:}\|_2 \|\bm{A}_{j:} \|_2} = \frac{ \|\bm{X}_{i:}\|}{\|\bm{X}\|_F}$ holds with $\beta = 1$. Using theorem~\ref{theorem:1}, we obtain bound:
\begin{equation}
    \begin{aligned}
    &\|\bm{X}^\top \bm{X} - \tilde{\bm{X}}^\top \tilde{\bm{X}}\|_F^2 \leq \frac{\mu^{2}}{k}\|\bm{A}\|_{F}^{4}, \\
    \Rightarrow &\|\bm{X}^\top \bm{X} - \tilde{\bm{X}}^\top \tilde{\bm{X}}\|_F \leq \frac{\mu}{\sqrt{k}}\|\bm{A}\|_{F}^{2}  \text{   (As $\|\bm{X}\|_2 \leq \|\bm{X}\|_F$)},\\
    \Rightarrow & \|\bm{X}^\top \bm{X} - \tilde{\bm{X}}^\top \tilde{\bm{X}}\|_2 \leq \frac{\mu}{\sqrt{k}}\operatorname{Tr}(\bm{X}^\top\bm{X}),
    \end{aligned}
\end{equation}
where $\delta \in(0,1), \mu=1+\sqrt{8\log (1 / \delta)}$. By setting $\frac{\mu}{\sqrt{k}} = \varepsilon$, we have:
\begin{equation}
    \mathcal{P}(\|\bm{X}^\top \bm{X} - \tilde{\bm{X}}^\top \tilde{\bm{X}}\|_F \leq \varepsilon \operatorname{Tr}(\bm{X}^\top\bm{X})) \geq 1 - e^{\left( -\frac{(\varepsilon \sqrt{k}-1)^2}{8}\right)},
\end{equation}
which completes the proof.

\noindent\textbf{Proof of Lemma~\ref{lemma:rp} (Random Projection).}
\label{proof:rp}
We prove the lemma by showing the following inequality  holds for any $\bm{x} \in \mathbb{R}^{n}$:
\begin{equation}
\label{eq:normpre}
    \mathcal{P}\left ( (1-\varepsilon)\|\bm{x}\|_2^{2} \leq\left\|\frac{1}{\sqrt{k}} \bm{P} \bm{x}\right\|^{2} \leq (1+\varepsilon)\|\bm{x}\|_2^{2}\right) \geq 1-e^{ \left(-\frac{k\varepsilon^2}{8}\right)}.
\end{equation}
We firstly show that  $\mathbb{E}\left[\left\|\frac{1}{\sqrt{k}} \bm{A} \bm{x}\right\|^{2}\right]=\|\bm{x}\|_2^2$:
\begin{equation}
\begin{aligned}
          & \mathbb{E}\left[\frac{1}{\sqrt{k}} \bm{P}\bm{x}\right] = \mathbb{E}\left[\sum_i^k \frac{1}{k}\sum_{j}(P_{ij} x_j)^2\right] = \mathbb{E}\left[\sum_i^k\frac{1}{k}\sum_{j,j'}(P_{ij} P_{ij'} x_j x_{j'})\right]\\
          & = \sum_i^k\frac{1}{k}\mathbb{E}\left[\sum_j \bm{P}^2_{jj}\bm{x}^2_j\right] =\sum_i^k\frac{1}{k} \sum_j x^2_j\\
          & = \|\bm{x}\|^2_2.
\end{aligned}
\end{equation}
This implies $\bm{X}_j \sim \mathcal{N}(0, 1))$ where $X_j = \frac{\bm{A}_{j:}\bm{x}}{\|\bm{x}\|_2}$. Then we obtain:
\begin{equation}
\begin{aligned}
& \mathcal{P}(\| \frac{1}{\sqrt{k}}\|\bm{P}\bm{x}\|_2^2 > (1+\varepsilon)\|\bm{x}\|_2)  = \mathcal{P}(\sum_{j=1}^k X^2_j  > (1+\varepsilon)k),\\
&=\mathcal{P}\left(e^{t \sum_{i=1}^{k} X_{j}^{2}}>e^{t(1+\varepsilon) k}\right)\\
& \leq \frac{\mathbb{E}\left[e^{t \sum_{j=1}^k X_{j}^{2}}\right]}{e^{t(1+\varepsilon) k}} \quad\text{  (apply Markov’s inequality)}\\ 
&= \frac{\prod_{j=1}^{k} \mathbb{E}\left[e^{t X_{j}^{2}}\right]}{e^{t(1+\varepsilon) k}} \quad\text{      (as $X_j$ is i.i.d)}\\ 
&= \frac{(\mathbb{E}\left[e^{t X_{j}^{2}}\right])^k}{e^{t(1+\varepsilon) k}}  \text{      (for $1<t<1/2$, it holds that $\mathbb{E}\left[e^{t X_{i}^{2}}\right] \leq\left(\frac{1}{\sqrt{1-2 t}}\right)$)}\\
& \leq \left(\frac{1}{\sqrt{1-2 t}}\right)^{k}\left(\frac{1}{e^{t(1+\varepsilon)}}\right)^{k}\\
& =\left(e^{-\varepsilon+\ln (1+\varepsilon)}\right)^{k / 2} \quad\text{ (using $\ln (1+\varepsilon) \leq \varepsilon-\frac{\varepsilon^2}{4}$)}\\
& \leq e^{-\varepsilon^2k/8}.
\end{aligned}
\end{equation}
Thus, we have:
\begin{equation}
    \mathcal{P}(\|\ \frac{1}{\sqrt{k}}\|\bm{P}\bm{x}\|_2^2 \leq (1+\varepsilon)\|\bm{x}\|^2_2)) \leq 1 - e^{-\varepsilon^2k/8}.
\end{equation}
In the similar way, it is easy to prove that:
\begin{equation}
    \mathcal{P}(\|\ \frac{1}{\sqrt{k}}\|\bm{P}\bm{x}\|_2^2 \geq (1-\varepsilon)\|\bm{x}\|^2_2)) \leq 1 - e^{-\varepsilon^2k/8}.
\end{equation}
Thus, it holds that:
\begin{equation}
\mathcal{P}((1-\varepsilon)\|\bm{x}\|^2_2 \leq \frac{1}{\sqrt{k}}\|\bm{P}\bm{x}\|_2^2 \leq (1+\varepsilon)\|\bm{x}\|^2_2) \geq 1 - e^{-\varepsilon^2k/8}.
\end{equation}

Suppose $\bm{X}^\top\bm{X} - \tilde{\bm{X}}^\top\tilde{\bm{X}} \succeq 0$ and $\bm{x}$ is the eigenvector of $\bm{X}^\top\bm{X} - \tilde{\bm{X}}^\top\tilde{\bm{X}}$ corresponding to its largest eigenvalue $\sigma_{\max}$, then we have:
\begin{align}
    \|\bm{X}^\top\bm{X} - \tilde{\bm{X}}^\top\tilde{\bm{X}}\|_2 &= \bm{x}^\top (\bm{X}^\top\bm{X} - \tilde{\bm{X}}^\top\tilde{\bm{X}}) \bm{x} = \sigma_{\max} \\
    & = \bm{x}^T \bm{X}^T\bm{X} \bm{x} - \bm{x}^T \tilde{\bm{X}}^T\tilde{\bm{X}} \bm{x} \\
    & = \|\bm{Xx}\|_2^2 - \|\tilde{\bm{X}}\bm{x}\|_2^2 \\
    & = \|\bm{Xx}\|_2^2 - \|\bm{PXx}\|_2^2 .
\end{align}
Applying Eq.~(\ref{eq:normpre}), there is at least probability $1 - e^{-\varepsilon^2k/8}$ such that:
\begin{equation}
   \begin{aligned}
    \|\bm{X}^\top\bm{X} - \tilde{\bm{X}}^\top\tilde{\bm{X}}\|_2 & \leq \|\bm{Xx}\|_2^2 + (\varepsilon-1) \|\bm{Xx}\|_2^2\\
   & = \varepsilon \|\bm{Xx}\|^2_2 \\
   &\leq \varepsilon \|\bm{X}\|^2_2\|\bm{x}\|^2_2 \text{  $(\|\bm{x}\|^2_2 = 1)$}\\ 
%   &\leq \varepsilon \|\bm{X}\|^2_F\\
   & \leq \varepsilon \operatorname{Tr}(\bm{X}^\top\bm{X})
\end{aligned} 
\end{equation}
In similar way, it is easy to show that above equation also holds for $\tilde{\bm{X}}^\top\tilde{\bm{X}} - \bm{X}^\top\bm{X} \succeq 0$, Thus, we have:
\begin{equation}
    \mathcal{P}(\|\bm{X}^\top\bm{X} - \tilde{\bm{X}}^\top\tilde{\bm{X}}\|_2 \leq \varepsilon \operatorname{Tr}(\bm{X}^\top\bm{X})) \geq  1 - e^{-\varepsilon^2k/8},
\end{equation}
which completes the proof.

\section{Accelerating the Feature Augmentation with a Very Sparse Random Projection}
To accelerate the random projection, approach \cite{li2006very} presented a sparse random projection as an improvement over the Gaussian random projection, in which entries of $\bm{P}$ are  i.i.d. sampled from:
\begin{equation}
    p_{ij}=\left\{\begin{array}{rl}
\sqrt{s} &\text { with probability } \frac{1}{2s}, \\
0 &\text { with probability } 1-\frac{1}{s}, \\
-\sqrt{s} &\text { with probability } \frac{1}{2s},
\end{array}\right.
\label{eq:sij}
\end{equation}
where $\frac{1}{s}$ denotes the density of matirx $\bm{P}$.
% \begin{figure}[h] 
% %   \begin{minipage}[t]{0.32\textwidth} 
% %     \includegraphics[width=\textwidth]{AAAI2021/picture/ratio.png}
% %     \caption{Classification accuracy in different reduction ratios.}
% %     \label{fig:density}
% %   \end{minipage}% 
% %   \hspace{0.1cm}
%  \begin{minipage}[t]{0.4\textwidth} 
%     \includegraphics[width=\textwidth]{Picture/MatrixDensity.png}
%     \caption{Classification accuracy \wrt different densities.}
%     \label{fig:density}
%   \end{minipage}% 
% \end{figure}
\label{sec:vsrp}
% \noindent\textbf{Dense vs. Sparse Random Projection}

\vspace{0.2cm}
%\noindent\textbf{Experiments.}
The computation cost of the random projection is related to the sparsity of the sparse random matrix (SRP). The low density of the SRP  reduces the computational cost while it may affect the performance at the same time. To explore the relationship between the density and performance, we vary the density of the random matrix in the range of $[0.001, 0.002, 0.004, 0.008, 0.016, 0.32, 0.064] $. We plot the relationship between the performance and density on three datasets in Figure~\ref{fig:density}. It is apparent that the performance improves  as the density increases. However, the performance reaches a relatively high peak at a low density ($\frac{1}{s} < 0.01$). This suggests that one could enjoy the efficiency provided by the SPMM without sacrificing the performance.

\section{Statistics of Datasets}
\label{sec:dataset}

\begin{table}[!h]
    \centering
    \begin{tabular}{lcccc}
\toprule
\toprule
Dataset & \#Nodes & \#Edges & \#Features & \#Classes \\
\midrule Wiki-CS~\cite{mernyei2020wiki}  & 11,701 & 216,123 & 300 & 10 \\
Amazon-Computers\cite{mcauley2015image} & 13,752 & 245,861 & 767 & 10 \\
Amazon-Photo\cite{mcauley2015image}  & 7,650 & 119,081 & 745 & 8 \\
Coauthor-CS~\cite{sinha2015overview} & 18,333 & 81,894 & 6,805 & 15 \\
Coauthor-Physics~\cite{sinha2015overview} & 34,493 & 247,962 & 8,415 & 5 \\
Cora~\cite{kipf2016semi} & 2,708 & 5,429 & 1,433 & 7\\
Citeseer~\cite{kipf2016semi} & 3,327 & 4,732 & 3,703 & 6 \\
Pubmed~\cite{kipf2016semi} & 19,717 & 44,338 & 500 & 3 \\
DBLP~\cite{kipf2016semi} & 17,716 & 105,734 & 1,639 & 4 \\
\bottomrule
\bottomrule
\end{tabular}
\caption{Statistics of datasets used in our experiments.}
\label{tab:my_label}
\end{table}

%\newpage
Below we describe  datasets from Table \ref{tab:my_label}:
\begin{itemize}
    \item
    \textbf{Cora, CiteSeer, Pubmeb, DBLP.} These are well-known citation network datasets, in which nodes represent publications and edges indicate their citations. All nodes are labeled according to paper subjects~\cite{kipf2016semi}.
    \item 
    
    \textbf{WikiCS.} It is a network of  Wikipedia pages related to the computer science, with edges showing cross-references. Each article is assigned to one of 10 subfields (classes), with characteristics computed using the content's averaged GloVe embeddings. We make no changes to the 20 train/val/test data splits provided by~\cite{mernyei2020wiki}.
    \item 
    
    \textbf{Am-Computer, AM-Photo.} Both of these networks are based on Amazon's co-purchase data. Nodes represent products, while edges show how frequently they were purchased together. Each product is described using a Bag-of-Words representation based on the reviews (node features). There are ten node classes (product categories) and eight node classes (product categories), respectively~\cite{mcauley2015image}.
    \item 
    
    \textbf{Coauthor-CS, Coauthor-Physics.} These are two networks that were extracted from the Microsoft Academic Graph dataset. The edges reflect a collaboration between two authors, while the nodes represent writers. The keywords that each author uses in their articles are utilized to categorize them (Bag-of-Words representation; node features). According to~\cite{sinha2015overview}, there are 15 author research fields (node classes) and 5 author research fields (node classes).
\end{itemize}

\end{document}